\def\FigControlSchemeFirstOrder{\centering\includegraphics[scale=0.2]{figures/FL-GP-model-based-first-order.png}}
\def\FigControlSchemeSecondOrder{\centering\includegraphics[scale=0.2]{figures/FL-GP-model-based-second-order.png}}
\def\FigTiltedPlane{\centering\includegraphics[scale=0.4]{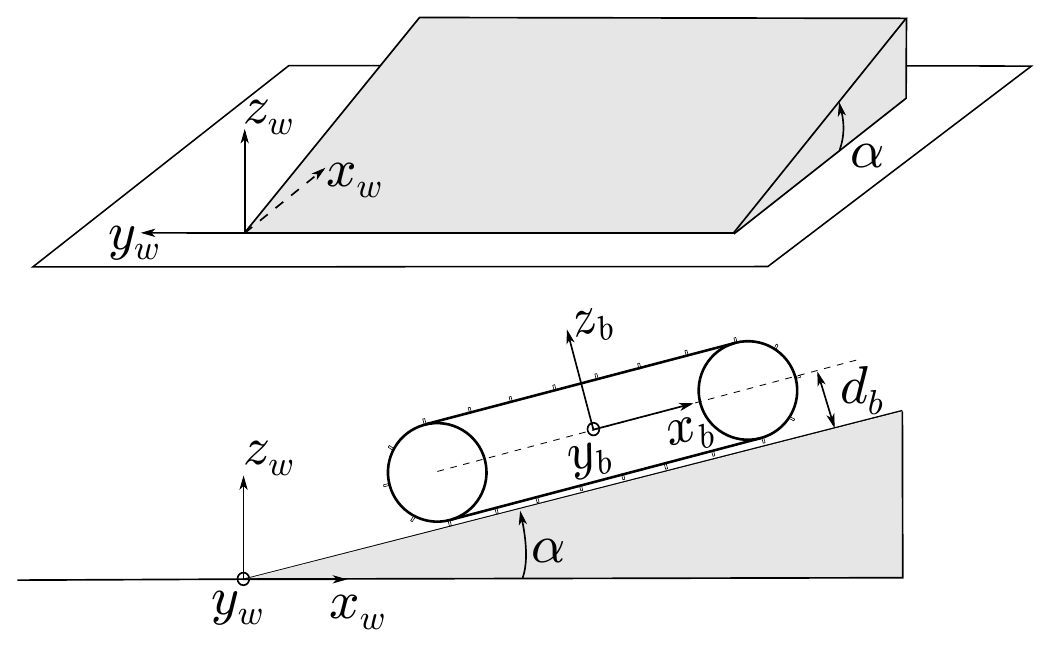}}
\newtheorem{theorem}{Theorem}[section]
\newtheorem{proposition}[theorem]{Proposition}
\newtheorem{corollary}[theorem]{Corollary}
\newcommand{\qed}{\nobreak \ifvmode \relax \else
      \ifdim\lastskip<1.5em \hskip-\lastskip
      \hskip1.5em plus0em minus0.5em \fi \nobreak
      \vrule height0.75em width0.5em depth0.25em\fi}
\def\qkp1{{\bf q}_{k{+}1}}
\def\qdkp1{\dot{{\bf q}}_{k{+}1}}
\def\xdkp1{\dot{x}_{k{+}1}}
\def\ydkp1{\dot{y}_{k{+}1}}
\def\yawdkp1{\dot{\theta}_{k{+}1}}
\def\vqdkp1{\begin{bmatrix}
\xdkp1 \\
\ydkp1 \\
\yawdkp1
\end{bmatrix}}
\def\ukp1{{\bf u}_{k{+}1}}
\title{\LARGE \bf
A Hybrid Approach for Trajectory Control Design}
\author{Luigi Freda$^{1}$ and Mario Gianni$^{1}$ and Fiora Pirri$^{1}$
\thanks{*This research is supported by EU-FP7-ICT-Project TRADR 609763}
\thanks{$^{1}$Authors are with ALCOR Laboratory, DIAG, Sapienza University of Rome, Italy
        {\tt\small \{freda, gianni, pirri\}@diag.uniroma1.it}}%
        }
\begin{document}

\maketitle
\thispagestyle{empty}
\pagestyle{empty}

\begin{abstract}
This work presents a methodology to design trajectory tracking feedback control laws, which embed non-parametric statistical models, such as Gaussian Processes (GPs). The aim is to minimize unmodeled dynamics such as undesired slippages. The proposed approach has the benefit of avoiding complex terramechanics analysis to directly estimate from data the robot dynamics on a wide class of trajectories. Experiments in both real and simulated environments prove that the proposed methodology is promising. 
%
%
\end{abstract}

\typeout{INTRODUCTION}
\section{Introduction}\label{sect:}
In the last decades, an increasing interest has been devoted to the design of high performance path tracking. 
In the literature, three main approaches to face this problem have emerged: \emph{(i)} model-based and adaptive control \cite{Martinez:2005,Ishigami:2006,Endo:2007,Moosavian:2008,Dar:2010}; \emph{(ii)} Gaussian Processes  or stochastic non-linear models for reinforcement learning of control policies \cite{Williams:2008,Deisenroth:2015}, and \emph{(iii)} nominal models and data-driven estimation of the residual \cite{Ko:2007,Ostafew:2014}.

The complexity of this problem requires several factors to be taken into account. First, theoretical understanding of the physical properties of the mechanical system, together with the physics underlying the interaction with the environment. Further, the identification of the system variables. Finally, uncertainty and noise in the measurements.

Motivated by the above considerations, in this work an hybrid approach is proposed, relying on both a derived nominal model and  Bayesian non-parametric data-driven estimation, based on Conditional Independent Gaussian Processes (CI-GPs). The nominal model serves to identify the leading variables of the system as well as to drive the process of acquisition of the data. 
The CI-GPs captures all the dynamics effects, which have not been explicitly accounted for by the nominal model. The CI-GPs manages uncertainty and noise in the measurements as well. 

In the hybrid method we propose here, the nominal model governs the leading physical variables, while the unforeseen physical variables are estimated by observations. Differently from the above mentioned approaches, the proposed model can capture dynamics effects, not designated  by the nominal model, by improving the feedback  with the required quantities provided by the stochastic model.  

For example, Endo {\em et al.} in \cite{Endo:2007} proposed an odometry model for tracked vehicles accounting for slippage. However, their method requires the accurate measurement of the vehicle velocity using internal sensors (e.g., an inertial sensor) and friction coefficients that change according to the ground surface. Moreover, model-based approaches are sensible to accumulated long-term errors.

Differently from model-based controllers, learned models resort to data collection to estimate the true robot model.
One of the main line of research, under this view,  is policy learning, either with stochastic non-linear models \cite{Ng:2000} or Gaussian Processes \cite{Deisenroth:2015}. 
However, the drawback of these approaches is that they require a huge amount of training examples for policy generation, though  they have the advantage of minimizing the error in the long run. 

The approaches based on the estimation of the residual assume that the real robot model is a linear combination of a prior and a disturbance model. The latter is then estimated  from experience in order to compensate for effects not captured by the prior, such as environmental disturbances and unknown dynamics \cite{Ostafew:2014}. These approaches, however, do not account  for the superimposition of the effects of the two models when linearly combined together, nor they take care of possible conflicts and misalignment between the individual estimates.  

The work is organized as follows. Next section introduces the nominal model. In section~\ref{sect:rob_contr} we introduce the trajectory control design and in Section~\ref{sect:cigp} we describe how the CI-GPs is grounded into the feedback control schema.
In Section~\ref{sect:experiments} we report the performance of the presented trajectory tracking control law incorporating the estimated model both in real and simulated environments.  Finally, we conclude with directions for future work.

%
%



\typeout{ROBOT MODEL}

\section{Robot Model}\label{Sect:RobotModel}

This section presents the first and second order kinematic models. 
We identify the main model variables and their general functional relationship. 

The robot moves in a 2D world. Its configuration is described by $\bm{q} = [x~y ~\varphi]^T \in SE(2)$, where $\bm{x} = [x~y]^T \in \mathbb{R}^2$ represents the robot position and $\varphi \in S^1$ is its yaw orientation.  
A complete 3D robot model is presented in sect.~\ref{Sect::3DKinematicModel}. 

\subsection{First Order Kinematic Model}\label{Sect:FirstOrderKinematics}

The 2D kinematic  model of a nonholonomic mobile robot is generally described by the following nonlinear driftless system~\cite{Siciliano:2010,Endo:2007}
\begin{equation}\label{Eq:KinematicsForwardDifferentialGeneral}
\dot{\bm{q}} = \bm{G}(\bm{q})\bm{v}
\end{equation}
where $\bm{v} = [v_l~v_r]^T \in \mathbb{R}^2$ is the vector of \emph{pseudo-velocities}\footnote{The difference between pseudo-velocities $\bm{v}$ and \emph{generalized velocities} $\dot{\bm{q}}$ will be clearer in Sect.~\ref{Sect:Sect:SecondOrderKinematics}, which presents the general form of the dynamic model equations for mobile robots.} containing the right and left track velocities\footnote{The most frequent definition of the pseudo-velocity vector is $\bm{v} = [v~\omega]^T \in \mathbb{R}^2$ where $v$ and $\omega$ are respectively the linear and angular velocities of the robot. Since $[v_l~v_r]^T = T [v~\omega]^T$, where $T \in GL(2)$ is an invertible 2x2 matrix depending on constant parameters, the above formulation is equivalent.}, and $\bm{G}(\bm{q})$ is a 3$\times$2 matrix whose columns span the null space of the matrix $\bm{A}^T(\bm{q})$ associated to the Pfaffian constraints\footnote{Kinematics constraints can be compactly expressed by using the Pfaffian form $\bm{A}^T(\bm{q})\dot{\bm{q}} = 0$ where the columns of $\bm{A}(\bm{q})$ are assumed to be smooth and linearly independent.}.
In practice, eq.~(\ref{Eq:KinematicsForwardDifferentialGeneral}) is typically specialized as
\begin{equation}\label{Eq:KinematicsForwardDifferential}
\dot{\bm{q}} = \bm{G}(\varphi)\bm{v}
\end{equation}
where it is implicitly assumed that the robot generalized velocities $\dot{\bm{q}}$ and the system equation parameters are independent from the robot position $\bm{x}$. This kinematics equation form is due to the the pure rolling constraint\footnote{For wheeled mobile robots the pure rolling constraint can be expressed as 
\[
 [{\sin\varphi}~{-\cos\varphi}~0]\dot{\bm{q}} {=} 0.
\] 
For tracked vehicles this constraint can be applied to each track in a slight different form by typically using the slip ratios: these characterize the longitudinal slips of the left and right tracks. See more in Sect.~\ref{Sect:TrackedVehicleKinematics}} which is typical of unicycle-like kinematics, and to the common assumption of an almost homogeneous supporting terrain. 

The forward integration of eq.~(\ref{Eq:KinematicsForwardDifferential}) by using proprioceptive sensory data is commonly referred to as \emph{dead reckoning}~\cite{Siciliano:2010}. This is typically used in the form of a first order difference equation obtained as
\begin{equation}\label{Eq:KinematicForwardModel}
\Delta\bm{q}_{t} = T_s \bm{G}(\varphi_t)\bm{v}_t = \bm{f}(\varphi_t,\bm{v}_t) = 
\begin{bmatrix}
f_x(\varphi_t,\bm{v}_t) \\
f_y(\varphi_t,\bm{v}_t) \\
f_\varphi(\varphi_t,\bm{v}_t)
\end{bmatrix} 
\end{equation}
where $t \in \mathbb{N}$ denotes the time index, $\Delta\bm{q}_{t} = \bm{q}_{t+1} - \bm{q}_t$ is the finite difference of $\bm{q}_{t}$ at time $t$, $T_s$ is a sufficiently small sample time that allows to approximate $\dot{\bm{q}}_{t} \simeq  \Delta\bm{q}_{t}/T_s$. We will refer to eq.~(\ref{Eq:KinematicForwardModel}) as the \emph{first order forward model}. 

The \textit{inverse model} of eq.~(\ref{Eq:KinematicForwardModel}) can be written as
\begin{equation}
\bm{v}_{t} = \bm{g}(\Delta\bm{q}^d_{t}, \varphi_t)
\end{equation}
which allows to compute the velocity commands $\bm{v}_{t}$ which are required to obtain a desired $\Delta\bm{q}^d_{t}$ for a given robot orientation $\varphi_t$. In principle, an inverse model can be obtained as 
\begin{equation}\label{Eq:DotVfromDotQ}
\bm{v} = \bm{G}^+(\varphi)\dot{\bm{q}}=(\bm{G}^T(\varphi)\bm{G}(\varphi))^{-1}\bm{G}(\varphi)^T\dot{\bm{q}}
\end{equation}
where we assume ${{\rm rank}(\bm{G}(\varphi))=2}$ and $\bm{G}^+$ is computed as a left pseudoinverse so as to guarantee $\bm{G}^+ \bm{G} = \bm{I}$. 
It follows that the composition $g \circ f$ coincides with the identity function ${\rm id}_{\bm{v}}$ on $\bm{v}$. For consistency we also require the inverse condition $f \circ  g = {\rm id}_{\Delta \bm{q}}$ to hold. In particular, we require 
\begin{equation}\label{Eq:FirstOrderConsistency}
\Delta\bm{x}^d_{t} = 
\begin{bmatrix}
f_x(\varphi_t,\bm{g}(\Delta\bm{q}^d_{t}, \varphi_t)) \\
f_y(\varphi_t,\bm{g}(\Delta\bm{q}^d_{t}, \varphi_t)) 
\end{bmatrix}.
\end{equation}
where we omit the condition on $\Delta\varphi^d_t$ since the $x$ and $y$ coordinates are flat outputs\footnote{I.e. the angle $\varphi$ can be computed as a function of the time derivatives of the component $x$ and $y$~\cite{Siciliano:2010}.} for the considered class of nonholonomic mobile robots. It is worth noting that, as detailed in sect.~\ref{Sect:FirstOrderControl}, eq.~(\ref{Eq:FirstOrderConsistency}) allows the implementation of a feedback linearization control scheme.
\begin{proposition}\label{Prop:FirstOrderConsistency}
Equation~(\ref{Eq:FirstOrderConsistency}) holds if and only if
\begin{equation}\label{Eq:FirstOrderConsistencyCondition}
\bm{G}(\varphi) \bm{G}^+(\varphi) = 
\begin{bmatrix}
\bm{I}_{2 \times 2} & \bm{0}\\
... & ... 
\end{bmatrix}.
\end{equation}
\end{proposition}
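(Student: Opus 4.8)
The plan is to reduce both the forward map $\bm{f}$ and the inverse map $\bm{g}$ to explicit linear expressions, so that eq.~(\ref{Eq:FirstOrderConsistency}) becomes a statement about a single $3\times3$ matrix. First I would observe that, by eq.~(\ref{Eq:KinematicForwardModel}), the forward model is linear in $\bm{v}$, namely $\bm{f}(\varphi_t,\bm{v}_t)=T_s\,\bm{G}(\varphi_t)\bm{v}_t$. Correspondingly, interpreting the inverse model eq.~(\ref{Eq:DotVfromDotQ}) in the discrete setting through the approximation $\dot{\bm{q}}_t\simeq\Delta\bm{q}_t/T_s$, the map $\bm{g}$ is also linear, $\bm{g}(\Delta\bm{q}^d_t,\varphi_t)=\tfrac{1}{T_s}\bm{G}^+(\varphi_t)\Delta\bm{q}^d_t$. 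Substituting one into the other, the composition collapses to $\bm{f}(\varphi_t,\bm{g}(\Delta\bm{q}^d_t,\varphi_t))=\bm{G}(\varphi_t)\bm{G}^+(\varphi_t)\,\Delta\bm{q}^d_t$, so that $f\circ g$ acts on the desired increment as multiplication by the $3\times3$ matrix $\bm{P}:=\bm{G}\bm{G}^+$, which is in fact the orthogonal projector onto the column space of $\bm{G}$.

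Next I would decompose $\bm{P}$ into blocks conformally with the splitting $\Delta\bm{q}^d_t=[(\Delta\bm{x}^d_t)^\top~\Delta\varphi^d_t]^\top$, writing its top two rows as $[\bm{P}_{11}~\bm{p}_{12}]$ with $\bm{P}_{11}\in\mathbb{R}^{2\times2}$ and $\bm{p}_{12}\in\mathbb{R}^{2}$. With this notation the right-hand side of eq.~(\ref{Eq:FirstOrderConsistency}) reads $\bm{P}_{11}\Delta\bm{x}^d_t+\bm{p}_{12}\Delta\varphi^d_t$, and consistency demands that this equal $\Delta\bm{x}^d_t$ for every admissible desired increment. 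For the necessity direction I would let $\Delta\bm{q}^d_t$ range over $\mathbb{R}^3$: choosing $\Delta\varphi^d_t=0$ and letting $\Delta\bm{x}^d_t$ sweep a basis of $\mathbb{R}^2$ forces $\bm{P}_{11}=\bm{I}_{2\times2}$, while choosing $\Delta\bm{x}^d_t=\bm{0}$ and $\Delta\varphi^d_t\neq0$ forces $\bm{p}_{12}=\bm{0}$; together these are precisely eq.~(\ref{Eq:FirstOrderConsistencyCondition}), the bottom row of $\bm{P}$ being left unconstrained. The sufficiency direction is immediate: if the top two rows of $\bm{P}$ equal $[\bm{I}_{2\times2}~\bm{0}]$, the first two components of $\bm{P}\Delta\bm{q}^d_t$ are identically $\Delta\bm{x}^d_t$.

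The step I expect to require the most care is the passage from the inverse model~(\ref{Eq:DotVfromDotQ}), written for the continuous generalized velocity, to the linear discrete map $\bm{g}(\cdot,\varphi)=\tfrac{1}{T_s}\bm{G}^+(\varphi)(\cdot)$ actually plugged into $\bm{f}$; this is where the sample-time scaling and the identification $\dot{\bm{q}}\simeq\Delta\bm{q}/T_s$ must be tracked consistently, and where one must be explicit about the quantifier ``for all desired increments'' underlying eq.~(\ref{Eq:FirstOrderConsistency}). A conceptual point worth stressing is that full invertibility $f\circ g=\mathrm{id}_{\Delta\bm{q}}$ would require $\bm{G}\bm{G}^+=\bm{I}_{3\times3}$, which is impossible since $\bm{G}$ is $3\times2$ of rank $2$ and $\bm{P}=\bm{G}\bm{G}^+$ is therefore a rank-$2$ projector; the role of the flat outputs $x,y$ is exactly to relax the requirement to the top $2\times3$ block, making eq.~(\ref{Eq:FirstOrderConsistencyCondition}) the best achievable consistency, in contrast to the exact relation $\bm{G}^+\bm{G}=\bm{I}$ that yields $g\circ f=\mathrm{id}_{\bm{v}}$.
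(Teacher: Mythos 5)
Your proposal is correct and follows essentially the same route as the paper's (one-line) proof: substitute the forward model~(\ref{Eq:KinematicForwardModel}) and the pseudo-inverse model~(\ref{Eq:DotVfromDotQ}) into eq.~(\ref{Eq:FirstOrderConsistency}), cancel the $T_s$ factors, and read off that the composition is multiplication by $\bm{G}\bm{G}^+$, whose top $2\times 3$ block must be $[\bm{I}_{2\times 2}~\bm{0}]$ for the identity to hold for all $\Delta\bm{q}^d_t$. You simply make explicit the quantifier argument and the block decomposition that the paper leaves implicit in ``immediately follows.''
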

\begin{proof}
If eqs~(\ref{Eq:KinematicForwardModel}) and (\ref{Eq:DotVfromDotQ}) are plugged into eq.~(\ref{Eq:FirstOrderConsistency}), eq.~(\ref{Eq:FirstOrderConsistencyCondition}) immeditely follows.  
\end{proof}
Hereafter, we will use the subscript $K$ to refer to the first order forward model $\bm{f}_K(\cdot)$ and its inverse model $\bm{g}_K(\cdot)$. 

\subsubsection{Tracked Vehicles}\label{Sect:FirstOrderTrackedVehicle}

A simple but effective extension of the differential drive kinematic model adopts the following velocity transformations in order to include slippage effects
\begin{align}
v &=  (v_l + v_r)/2  \\
\omega &= \chi(v_r - v_l)/d .
\end{align}
where $\chi \in[0,1]$, aka the \textit{steering efficiency}~\cite{Martinez:2005}, acts as a damping factor on $\omega$ and $d$ is the vehicle tread. The resulting tracked vehicle kinematic model is
\begin{equation}\label{Eq:TrackedVehicleKinematics}
\dot{\bm{q}}
= \bm{G}_{\chi}(\varphi) \bm{v}=
\begin{bmatrix}
\frac{\cos\varphi}{2} & \frac{\cos\varphi}{2} \\ 
\frac{\sin\varphi}{2} & \frac{\sin\varphi}{2}\\ 
-\frac{\chi}{d} & \frac{\chi}{d}
\end{bmatrix}
\bm{v}
\end{equation}
and its inverse model is 
\begin{equation}\label{Eq:TrackedVehicleKinematicsInv}
\bm{v}
= 
\bm{G}^+_{\chi}(\varphi) \dot{\bm{q}}=
\begin{bmatrix}
\cos\varphi & \sin\varphi & -\frac{d}{2\chi} \\ 
\cos\varphi & \sin\varphi & +\frac{d}{2\chi}
\end{bmatrix}
\dot{\bm{q}}
\end{equation}
It is worth noting that eqs~(\ref{Eq:TrackedVehicleKinematics})--(\ref{Eq:TrackedVehicleKinematicsInv})  describe a classic unicycle model when $\chi = 1$. Moreover, $\bm{G}^+_{\chi}$ does not satisfy\footnote{Indeed, this is not a surprise since unicycle-like models cannot be transformed into a linear controllable systems by using a static state feedback~\cite{Oriolo:2002}.}
eq.~(\ref{Eq:FirstOrderConsistencyCondition}). In order to solve this latter problem, a different representative point for the robot position can be chosen, namely
the point ${\bm{x}_B=[x_B~y_B]=[x + b \cos\varphi~~y + b \sin\varphi]}$, with $b\neq 0$, located along the sagittal axis of the vehicle at distance $b$ from robot centre $[x~y]$. In this case, the forward model is
\begin{equation}\label{Eq:TrackedVehicleKinematicsB}
\dot{\bm{q}}_B
{=} \bm{G}_b(\varphi) \bm{v}{=}
\begin{bmatrix}
\frac{\cos\varphi}{2}+\frac{\chi b\sin\varphi}{d} & \frac{\cos\varphi}{2}-\frac{\chi b\sin\varphi}{d} \\ 
\frac{\sin\varphi}{2}-\frac{\chi b\cos\varphi}{d} & \frac{\sin\varphi}{2}+\frac{\chi b\cos\varphi}{d}\\ 
-\frac{\chi}{d} & \frac{\chi}{d}
\end{bmatrix}
\bm{v}
\end{equation}
and $\bm{x}_B$ is no more subject to nonholonomic constraints. We select as inverse model 
\begin{equation}\label{Eq:TrackedVehicleKinematicsBInv}
\bm{v} 
{=} \bm{G}^+_b(\varphi) \dot{\bm{q}}_B {=}
\begin{bmatrix}
\cos\varphi+\frac{d \sin\varphi}{2 \chi b} & \sin\varphi-\frac{d \cos\varphi}{2 \chi b} & 0\\ 
\cos\varphi-\frac{d \sin\varphi}{2 \chi b} & \sin\varphi+\frac{d \cos\varphi}{2 \chi b} & 0\\ 
\end{bmatrix}
\dot{\bm{q}}_B 
\end{equation}
where $\bm{q}_B=[x_B~y_B~\varphi]$. It is easy to show that $\bm{G}^+_b \bm{G} = \bm{I}$ and $\bm{G}^+_b$ satisfies  eq.~(\ref{Eq:FirstOrderConsistencyCondition}). By using proposition~\ref{Prop:FirstOrderConsistency}, we obtain that eq.~(\ref{Eq:FirstOrderConsistency}) is satisfied with $\bm{q}$ and $\bm{x}$ respectively replaced by $\bm{q}_B$ and $\bm{x}_B$. Note that, by using $\bm{x}_B$, the inverse model assumes the particular form
\begin{equation}\label{Eq:FirstOrderInverseModelWithOffset}
\bm{v}_{t} = \bm{g}_K(\Delta\bm{x}^d_{B,t}, \varphi_t)
\end{equation}
In the reminder of this paper, we will make use of this first order kinematic model.


\subsection{Second Order Kinematic Model}\label{Sect:Sect:SecondOrderKinematics}

The 2D dynamic model  of a mobile robot can be generally described by the following state-space reduced model~\cite{Siciliano:2010}
\begin{equation}\label{Eq:dynamicsGeneral}
\begin{cases}
\dot{\bm{q}} &= \bm{G}(\varphi) \bm{v}\\ 
\dot{\bm{v}} &= \bm{F}(\bm{q}, \bm{v},\bm{\tau})
\end{cases}
\end{equation}
where the first equation is the kinematic model presented in eq.~(\ref{Eq:KinematicsForwardDifferential}), $\bm{\tau} = [\tau_l~\tau_r]^T \in \mathbb{R}^2$ is the vector of left and right \emph{motor torques} and the function $\bm{F}(\cdot)$ represents the reduced dynamic model expressed in terms of pseudo-velocities. 

In general, commercial mobile robots come already equipped with two pre-tuned motor control systems. Each of these low-level control systems typically implements an independent control scheme around the controlled track/wheel. This control scheme is responsible of generating the motor torque signal $\tau_t \in \mathbb{R}$ in order to track the input \emph{reference velocity} $v^d_t \in \mathbb{R}$, and in order to guarantee asymptotic stability and disturbance rejection. Clearly, the dynamic response of the complete robot system depends on a large number of factors and on the particular operating conditions, and it is very difficult to capture it in an exact mathematical model. Given the structure of eq.~(\ref{Eq:dynamicsGeneral}) and assuming the two motor control system dynamics can be modelled as two independent and decoupled linear low-pass filters, we adopt the following simplified model
\begin{equation}\label{Eq:dynamics}
\begin{cases}
\dot{\bm{q}} &= \bm{G}(\varphi) \bm{v}\\ 
v_l^{(n)} &= \sum \limits_{i=0}^{n-1} a_i v_l^{(i)} + b_0 v_l^d\\
v_r^{(n)} &= \sum \limits_{i=0}^{n-1}  a_i v_r^{(i)} + b_0 v_r^d
\end{cases}
\end{equation}
where $v^{(n)}$ denotes the $n$-th time derivative of the \textit{actual velocity} $v$ (with $v^{(0)}\equiv v$) and the second and third equations model respectively the left and right low-pass-filter-like dynamics. Here, $n\in \mathbb{N}$ determines the overall \emph{model order}. Eq.~(\ref{Eq:dynamics}) can be considered as a $n+1$-order kinematic model. 

Note that the filter coefficients $a_i \in \mathbb{R}$ and $b_0 \in \mathbb{R}$ can be used in order to model a certain dynamic behaviour through a Butterworth, Tschebyscheff or Bessel filters. In general, a second order low pass filter (i.e. $n=2$) is typically sufficient for capturing the main dynamics characteristics: rise time, overshoot and settling time of the step response. It is worth noting that as in the previous section, we implicitly assume that the robot velocities and the parameters of eq.~(\ref{Eq:dynamics}) are independent from the robot position.

Now, assume the system dynamics bandwidth $B$ is sufficiently small w.r.t. sample frequency $1/T_s$\footnote{From Nyquist-Shannon sampling theorem, the following condition must hold $1/T_s > 2B$.}. By integrating the first part of eq.~(\ref{Eq:dynamics}) with Euler method and the second part with high-order numerical methods~\cite{Holoborodko:2008}, one can obtain the following equation
\begin{equation}
\begin{cases}
\Delta\bm{q}_t &= T_s \bm{G}(\varphi_t) \bm{v}_t\\ 
\bm{v}_{t+n} &= \sum \limits_{i=0}^{n-1}  c_i \bm{v}_{t+i} + d_0 \bm{v}_t^d
\end{cases}
\end{equation}
where the coefficients $c_i \in \mathbb{R}$ and $d_0 \in \mathbb{R}$ are the digital counterparts of the above terms $a_i$ and $b_0$. 
Finally, we can obtain an overall input-output model of the following form
\begin{equation}
\Delta\bm{q}_{t+n} = \bm{f}(\Delta\bm{q}_{t},\Delta\bm{q}_{t+1},...,\Delta\bm{q}_{t+n-1}, \varphi_t,\bm{v}^d_t)
\end{equation}
where $\Delta\bm{q}_{t+i} = \bm{q}_{t+i+1} - \bm{q}_{t+i}$ and we used the fact that $\bm{v}_{t+i} = \bm{G}^+(\varphi_{t+i}) \Delta\bm{q}_{t+i}/T_s$ and $\varphi_{t+n} = \varphi_t + \sum \limits_{i=0}^{n-1} \Delta\varphi_{t+i}$. 

For $n=1$ we can obtain the following second order forward model 
\begin{equation}\label{Eq:DynamicForwardModel}
\Delta\bm{q}_{t+1} = \bm{f}(\Delta\bm{q}_{t}, \varphi_t,\bm{v}^d_t) = \begin{bmatrix}
\bm{f}_x(\Delta\bm{q}_{t}, \varphi_t,\bm{v}^d_t) \\
\bm{f}_y(\Delta\bm{q}_{t}, \varphi_t,\bm{v}^d_t) \\
\bm{f}_\varphi(\Delta\bm{q}_{t}, \varphi_t,\bm{v}^d_t)
\end{bmatrix}. 
\end{equation}
Here an exponentially weighted moving average can be used as first order low pass filter
\begin{equation}\label{Eq:ExpMovingAverageFilter}
\bm{v}_{t+1} = \alpha \bm{v}_t + (1-\alpha) \bm{v}_t^d
\end{equation}
where $\alpha \in [0,1]$ acts as a smoothing/forgetting factor. By using this filter, one obtains
\begin{equation}\label{Eq:DynamicForwardModelDetail}
\Delta{\bm{q}_{t+1}}
= T_s \bm{G}(\varphi_{t+1}) (\alpha \bm{G}^+(\varphi_t)\dfrac{\Delta\bm{q}_t}{T_s} + (1-\alpha) \bm{v}^d_t) 
\end{equation}
where $\varphi_{t+1} = \varphi_t + \Delta\varphi_t$.
It is worth noting that in the previous equations $\Delta\bm{q}_{t+1}$ is used to convey second order time derivative information since for small $T_s$: ${\ddot{\bm{q}}_{t} \simeq (\Delta \bm{q}_{t+1} - \Delta\bm{q}_{t})/T_s^2}$.

The inverse model of eq.~(\ref{Eq:DynamicForwardModel}) is 
\begin{equation}
\bm{v}^d_{t} = \bm{g}(\Delta\bm{q}^d_{t+1}, \Delta\bm{q}_{t}, \varphi_t)
\end{equation}
which allows to compute the reference velocities $\bm{v}^d_{t}$ which are required to obtain a desired $\Delta\bm{q}^d_{t+1}$ for given robot "velocity" $\Delta\bm{q}_{t}$ and orientation $\varphi_t$. Starting from eq.~(\ref{Eq:DynamicForwardModelDetail}) and using the left-pseudo inverse property $\bm{G}^+ \bm{G} = \bm{I}$, one obtains 
\begin{equation}\label{Eq:DynamInverseModelDetail}
\bm{v}^d_t
= \frac{1}{1-\alpha}( \bm{G}^+(\varphi_{t+1})\dfrac{\Delta{\bm{q}^d_{t+1}}}{T_s} - \alpha \bm{G}^+(\varphi_t)\dfrac{\Delta{\bm{q}_t}}{T_s}) 
\end{equation}
where again $\varphi_{t+1} = \varphi_t + \Delta\varphi_t$.
As in sect.~\ref{Sect:FirstOrderKinematics}, for consistency we require the condition $f \circ  g = {\rm id}_{\Delta \bm{q}}$ to hold. In particular, we require 
\begin{equation}\label{Eq:SecondOrderConsistency}
\Delta\bm{x}^d_{t+1} = \begin{bmatrix}
f_x(\Delta\bm{q}_{t}, \varphi_t,\bm{g}(\Delta\bm{q}^d_{t+1},\Delta\bm{q}_{t}, \varphi_t)) \\
f_y(\Delta\bm{q}_{t}, \varphi_t,\bm{g}(\Delta\bm{q}^d_{t+1},\Delta\bm{q}_{t}, \varphi_t))
\end{bmatrix} 
\end{equation}
where, as in Sect.~\ref{Sect:FirstOrderKinematics}, we omitted the condition on $\Delta\varphi^d_{t+1}$ given the flatness of the considered nonholonomic systems. 
In sect.~\ref{Sect:SecondOrderControl}, eq.~(\ref{Eq:SecondOrderConsistency}) will allow the implementation of feedback linearization scheme on the second order model.
\begin{proposition}\label{Prop:SecondOrderConsistency}
Equation~(\ref{Eq:SecondOrderConsistency}) holds if and  only if eq.~(\ref{Eq:FirstOrderConsistencyCondition}) is satisfied.
\end{proposition}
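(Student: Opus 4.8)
The plan is to proceed exactly as in the proof of Proposition~\ref{Prop:FirstOrderConsistency}: I would form the composition $\bm{f}\circ\bm{g}$ explicitly by substituting the inverse model~(\ref{Eq:DynamInverseModelDetail}) into the forward model~(\ref{Eq:DynamicForwardModelDetail}), and then read off the condition under which its first two (i.e. $x$ and $y$) components reduce to the identity on $\Delta\bm{x}^d_{t+1}$. Concretely, I would insert the expression for $\bm{v}^d_t$ given by~(\ref{Eq:DynamInverseModelDetail}) into the argument of $\bm{G}(\varphi_{t+1})$ appearing in~(\ref{Eq:DynamicForwardModelDetail}).

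The key observation is that the factor $(1-\alpha)$ multiplying $\bm{v}^d_t$ in the forward model exactly cancels the $1/(1-\alpha)$ prefactor of the inverse model, while the two terms $\alpha\,\bm{G}^+(\varphi_t)\Delta\bm{q}_t/T_s$ — the one already present in~(\ref{Eq:DynamicForwardModelDetail}) and the one carried inside $\bm{v}^d_t$ — cancel against each other. What survives is simply $\bm{G}^+(\varphi_{t+1})\Delta\bm{q}^d_{t+1}/T_s$, so that the composition collapses to
\begin{equation}\label{Eq:SecondOrderComposition}
\Delta\bm{q}_{t+1} = \bm{G}(\varphi_{t+1})\,\bm{G}^+(\varphi_{t+1})\,\Delta\bm{q}^d_{t+1}.
\end{equation}
This is structurally the same identity obtained in the first order case, the only difference being that the orientation at which $\bm{G}\bm{G}^+$ is evaluated is now $\varphi_{t+1}=\varphi_t+\Delta\varphi_t$ rather than $\varphi_t$.

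It then remains to compare the first two rows of~(\ref{Eq:SecondOrderComposition}) with the required identity in~(\ref{Eq:SecondOrderConsistency}). The consistency condition demands that, for every admissible desired displacement $\Delta\bm{q}^d_{t+1}$, the $x$ and $y$ components of the right-hand side of~(\ref{Eq:SecondOrderComposition}) equal $\Delta\bm{x}^d_{t+1}$. Since $\Delta\bm{q}^d_{t+1}$ ranges over an open subset of $\mathbb{R}^3$, this per-vector identity holds for all such displacements precisely when the top two rows of $\bm{G}(\varphi_{t+1})\bm{G}^+(\varphi_{t+1})$ equal $[\,\bm{I}_{2\times2}\ \ \bm{0}\,]$, which is eq.~(\ref{Eq:FirstOrderConsistencyCondition}) evaluated at $\varphi_{t+1}$. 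Finally, because $\varphi_{t+1}$ can attain any value in $S^1$ as the state varies, requiring this at every state is equivalent to imposing~(\ref{Eq:FirstOrderConsistencyCondition}) at all orientations, and this chain of ``precisely when'' equivalences delivers both directions of the statement.

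The argument carries no genuine analytical difficulty; the points needing care are purely bookkeeping. The main thing to get right is the cancellation of the $\alpha$-weighted terms that produces~(\ref{Eq:SecondOrderComposition}), and then the logical step that upgrades the per-vector identity into the matrix condition — namely invoking the arbitrariness of $\Delta\bm{q}^d_{t+1}$ (to pin down the two rows) together with that of $\varphi_{t+1}$ (to make the condition hold at every orientation). I would also note explicitly that consistency is imposed only on the flat outputs $x,y$, so the bottom row of $\bm{G}\bm{G}^+$ stays unconstrained, exactly mirroring the block structure of~(\ref{Eq:FirstOrderConsistencyCondition}).
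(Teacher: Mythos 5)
Your proposal is correct and follows essentially the same route as the paper's own (one-line) proof: substituting the inverse model~(\ref{Eq:DynamInverseModelDetail}) into the forward model~(\ref{Eq:DynamicForwardModelDetail}), observing that the $\alpha$-weighted terms and the $(1-\alpha)$ factors cancel to leave $\Delta\bm{q}_{t+1}=\bm{G}(\varphi_{t+1})\bm{G}^+(\varphi_{t+1})\Delta\bm{q}^d_{t+1}$, and reading off condition~(\ref{Eq:FirstOrderConsistencyCondition}) from the arbitrariness of $\Delta\bm{q}^d_{t+1}$ and of the orientation. You simply make explicit the cancellation and the per-vector-to-matrix step that the paper leaves as ``one can easily obtain.''
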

\begin{proof}
By plugging eqs~(\ref{Eq:DynamicForwardModelDetail}) and~(\ref{Eq:DynamInverseModelDetail}) into eq.~(\ref{Eq:SecondOrderConsistency}), one can easily obtain eq.~(\ref{Eq:FirstOrderConsistencyCondition}). 
\end{proof}

Hereafter, we will use the subscript $D$ to refer to the second order forward kinematic model $\bm{f}_D(\cdot)$ and its inverse model $\bm{g}_D(\cdot)$.

\subsubsection{Second Order Model for Tracked Vehicles}

In this work, we consider the second order model which is obtained by plugging the matrix $\bm{G}_b$ of eqs~(\ref{Eq:TrackedVehicleKinematicsB})--(\ref{Eq:TrackedVehicleKinematicsBInv}) into eqs~(\ref{Eq:DynamicForwardModelDetail}) and~(\ref{Eq:DynamInverseModelDetail}).
As shown above, eq.~(\ref{Eq:FirstOrderConsistencyCondition}) is satisfied by $\bm{G}_b$ and hence, by using proposition~\ref{Prop:SecondOrderConsistency},  eq.~(\ref{Eq:SecondOrderConsistency}) holds with $\bm{q}$ and $\bm{x}$ respectively replace by $\bm{q}_B$ and $\bm{x}_B$. Note that, by using $\bm{G}_b$,  we obtain the following form
\begin{equation}\label{Eq:SecondOrderInverseModelWithOffset}
\bm{v}^d_{t} = \bm{g}_D(\Delta\bm{x}^d_{B,t+1},\Delta\bm{q}_{B,t}, \varphi_t)
\end{equation}
In the reminder of this paper, we will make use of this second order model.

\begin{table}\label{Tab:ForwardInverseSummary}
\centering
\resizebox{\columnwidth}{!}{%
\begin{tabular}{|l|l|l|}
\hline
~ &  Forward & Inverse \\
\hline 
1st order & $\Delta\bm{q}_{B,t} = \bm{f}_K(\varphi_t,\bm{v}_t)$ & $\bm{v}_{t} = \bm{g}_K(\Delta\bm{x}^d_{B,t}, \varphi_t)$\\
\hline
2nd order & $\Delta\bm{q}_{B,t+1} = \bm{f}_D(\Delta\bm{q}_{B,t}, \varphi_t,\bm{v}^d_t)$ & $\bm{v}^d_{t} = \bm{g}_D(\Delta\bm{x}^d_{B,t+1},\Delta\bm{q}_{B,t}, \varphi_t)$ \\
\hline
\end{tabular}
}%
\caption{Adopted forward and inverse functional models.} 
\end{table}

\typeout{ROBOT CONTROL}
\begin{figure}[!t]
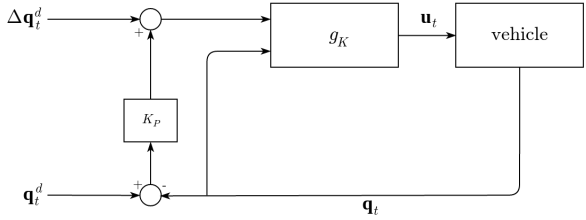

\begin{center}
\FigControlSchemeFirstOrder \caption{First order control scheme.} \label{Fig:FigFirstOrderControlScheme}
\end{center}
\end{figure}

\begin{figure}[!t]
\begin{center}
\FigControlSchemeSecondOrder \caption{Second order control scheme.} \label{Fig:FigSecondOrderControlScheme}
\end{center}
\end{figure}

\begin{figure}
\centering
\subfigure[]{\includegraphics[width=0.48\columnwidth]{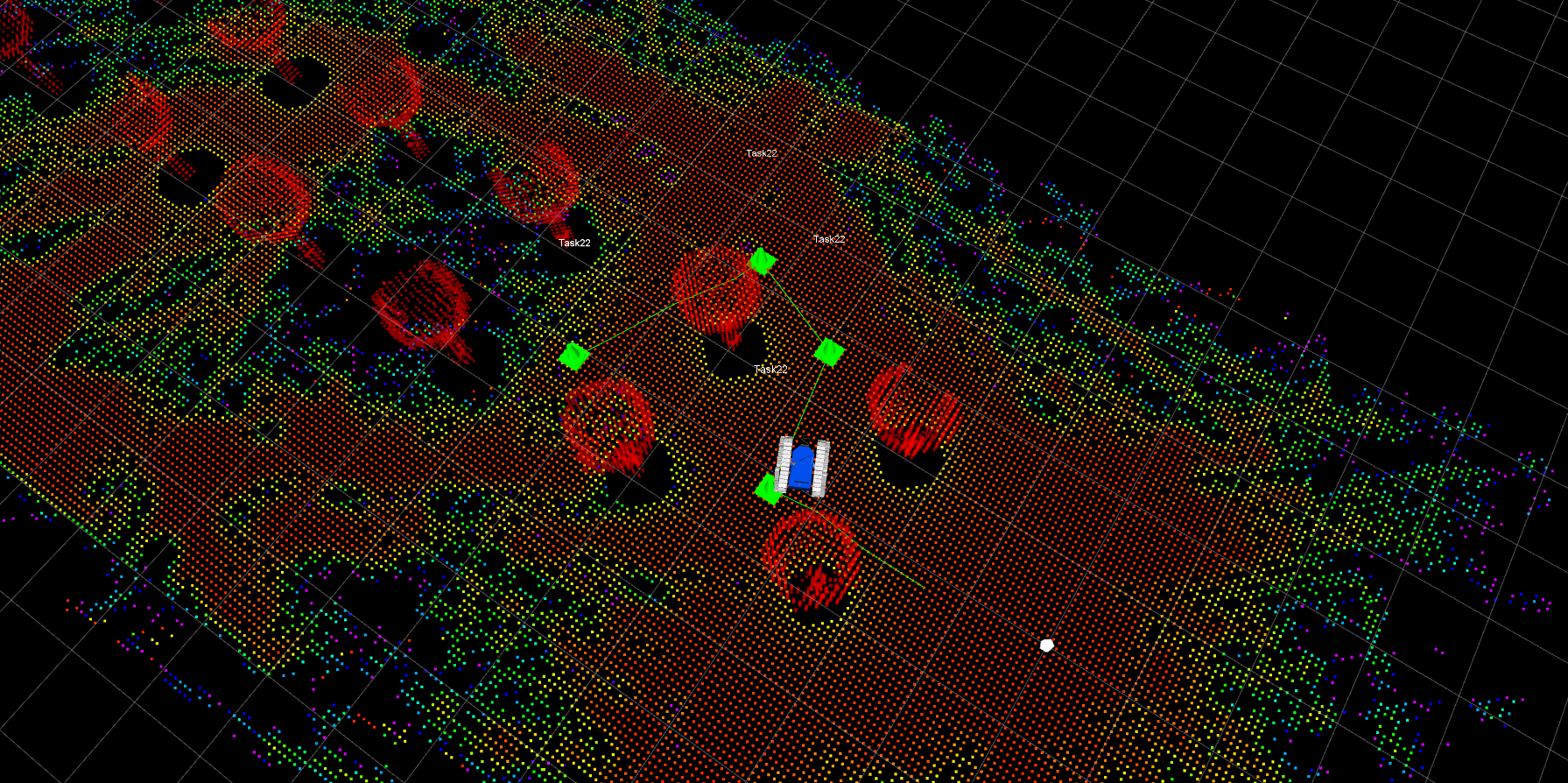}}
\subfigure[]{\includegraphics[width=0.48\columnwidth]{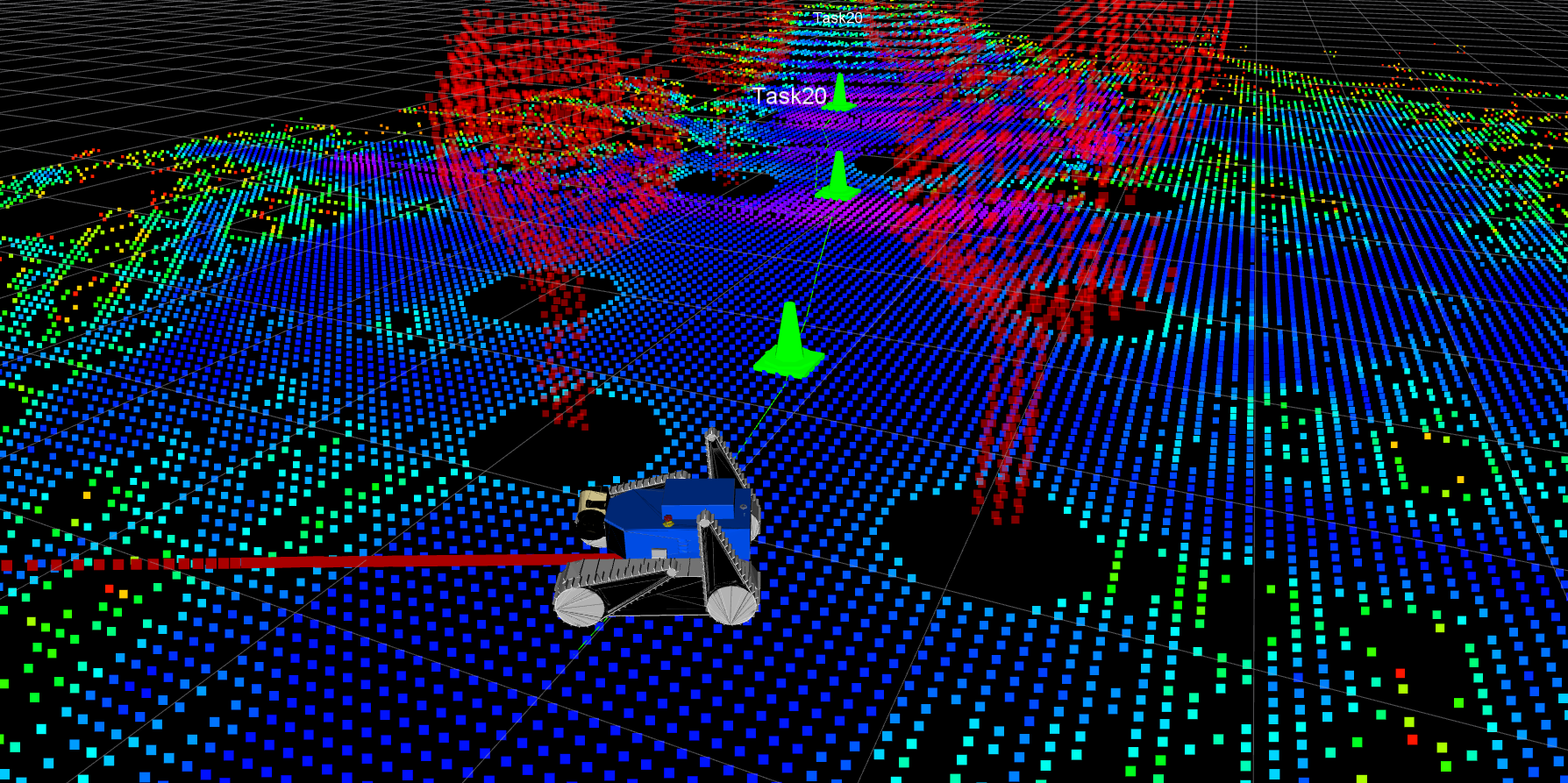}}
\subfigure[]{\includegraphics[width=0.48\columnwidth]{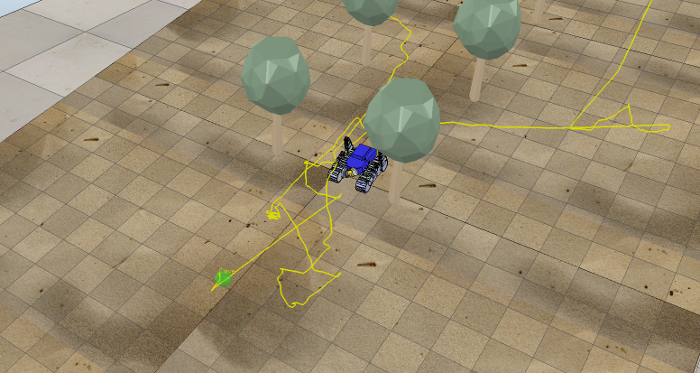}}
\subfigure[]{\includegraphics[width=0.48\columnwidth]{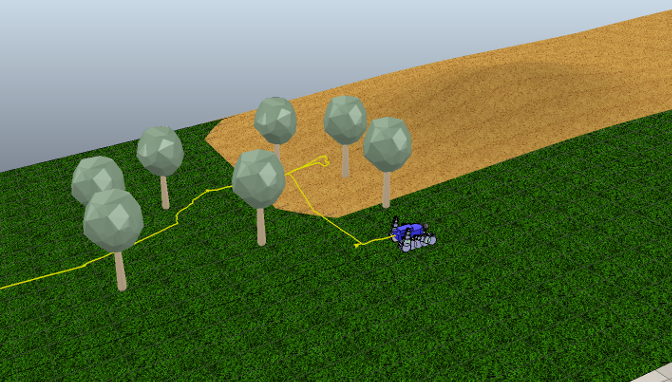}}
\caption{(a) and (b) Free-shape reference trajectories generated in simulation through a waypoints selection interface on a horizontal surface and on an inclined one with different friction coefficients, respectively; (c) Robot tracking the free-shape reference trajectory on a flat terrain; (d) Tracking task performed on an inclined surface with friction.}
\label{fig:data_coll}
\end{figure}

\section{Trajectory controller design}\label{sect:rob_contr}

\subsection{First Order Model Control}\label{Sect:FirstOrderControl}

Consider eqs~(\ref{Eq:TrackedVehicleKinematicsB})--(\ref{Eq:FirstOrderInverseModelWithOffset}) of sect.~\ref{Sect:FirstOrderKinematics}.  Since equation~(\ref{Eq:FirstOrderConsistency}) is satisfied by $\bm{G}^+_b$ the following corollary holds. 
\begin{corollary}
Consider eq.~(\ref{Eq:FirstOrderInverseModelWithOffset}), the control law
\begin{equation}\label{Eq:FirstOrderControlScheme}
\bm{v}_{t} = g_K(\bm{u}_t, \varphi_t) 
\end{equation} 
implements a feedback linearization control scheme~\cite{Isidori:1995} on the trajectory $\bm{x}_{B,t}$, i.e. one obtains $\Delta \bm{x}_{B,t} = \bm{u}_t$.
\end{corollary}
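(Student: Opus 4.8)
The plan is to recognize that this corollary is an immediate consequence of the consistency condition already established in Sect.~\ref{Sect:FirstOrderTrackedVehicle} for the offset representative point $\bm{x}_B$. First I would observe that adopting the control law~(\ref{Eq:FirstOrderControlScheme}) amounts to feeding the virtual input $\bm{u}_t$ into the inverse model $\bm{g}_K$ in the slot occupied by the desired displacement $\Delta\bm{x}^d_{B,t}$ of eq.~(\ref{Eq:FirstOrderInverseModelWithOffset}); that is, one identifies $\bm{u}_t$ with $\Delta\bm{x}^d_{B,t}$.

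Next I would push the resulting commanded velocity $\bm{v}_t = \bm{g}_K(\bm{u}_t,\varphi_t)$ through the first order forward model $\bm{f}_K$ of eq.~(\ref{Eq:KinematicForwardModel}), so that the actual displacement of the representative point reads
\begin{equation*}
\Delta\bm{x}_{B,t} =
\begin{bmatrix}
f_x(\varphi_t,\bm{g}_K(\bm{u}_t,\varphi_t)) \\
f_y(\varphi_t,\bm{g}_K(\bm{u}_t,\varphi_t))
\end{bmatrix}.
\end{equation*}
The key step is then to invoke the consistency relation~(\ref{Eq:FirstOrderConsistency}) which, by Proposition~\ref{Prop:FirstOrderConsistency}, holds precisely because $\bm{G}^+_b$ satisfies eq.~(\ref{Eq:FirstOrderConsistencyCondition}) (with $\bm{q}$ and $\bm{x}$ replaced by $\bm{q}_B$ and $\bm{x}_B$). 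Substituting $\bm{u}_t$ for $\Delta\bm{x}^d_{B,t}$ in~(\ref{Eq:FirstOrderConsistency}) gives $\Delta\bm{x}_{B,t} = \bm{u}_t$, i.e. the closed-loop map from the virtual input to the output displacement is the identity. Since the $x$ and $y$ coordinates of $\bm{x}_B$ are flat outputs, the orientation $\varphi$ need not be regulated separately, which is why no companion condition on $\Delta\varphi^d_t$ enters; the identity map on the position displacement is exactly a discrete integrator $\Delta\bm{x}_{B,t}=\bm{u}_t$, the defining feature of feedback linearization.

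Because every ingredient — the forward/inverse pair $(\bm{f}_K,\bm{g}_K)$, the consistency condition, and its algebraic characterization through $\bm{G}_b\bm{G}^+_b$ — is already available from the preceding results, I do not expect any genuine analytic obstacle here. The only point requiring care is to ensure that the consistency condition is applied to the offset coordinates $\bm{x}_B$ rather than to the original centre $\bm{x}$: for the centre, eq.~(\ref{Eq:FirstOrderConsistencyCondition}) fails (as noted after eq.~(\ref{Eq:TrackedVehicleKinematicsInv})), so the nonholonomic constraint would obstruct the identity and the feedback-linearizing property would not hold. Verifying that the proof invokes the $\bm{x}_B$ version of~(\ref{Eq:FirstOrderConsistency}) is therefore the single substantive check.
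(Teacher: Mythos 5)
Your argument is correct and is essentially the paper's own: the corollary is presented there as an immediate consequence of the fact that $\bm{G}^+_b$ satisfies eq.~(\ref{Eq:FirstOrderConsistencyCondition}), so that by Proposition~\ref{Prop:FirstOrderConsistency} the consistency identity~(\ref{Eq:FirstOrderConsistency}) holds for the offset point $\bm{x}_B$, and substituting $\bm{u}_t$ for $\Delta\bm{x}^d_{B,t}$ yields $\Delta\bm{x}_{B,t}=\bm{u}_t$. Your additional remark that the identity would fail for the robot centre $\bm{x}$ (where the consistency condition is violated) is the same caveat the paper makes when introducing the representative point $\bm{x}_B$.
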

At simple trajectory tracking controller can be obtained with
\begin{align}\label{Eq:FirstOrderControlLaw}
\bm{u} &=\Delta\bm{x}^d_{t} + K_P(\bm{x}^d_{t} - \bm{x}_{B,t})
\end{align}
where $K_P = {\rm diag}(k_{P,i})$ is a 2x2 diagonal matrix which contains the scalar control gains $k_{P,i} \in \mathbb{R}$ for $i\in\{1,2\}$. A block diagram representing this control law is sketched in Fig.~\ref{Fig:FigFirstOrderControlScheme}. 
In fact, by plugging eqs~(\ref{Eq:FirstOrderControlScheme})--(\ref{Eq:FirstOrderControlLaw}) into eq.~(\ref{Eq:FirstOrderConsistency}) one obtains 
\begin{equation}
\Delta\bm{x}_{B,t} = \Delta\bm{x}^d_{t} + K_P(\bm{x}^d_{t} - \bm{x}_{B,t})
\end{equation}
which is characterized by the following discrete time error dynamics 
\begin{equation}
\textbf{e}_{t+1} = (1-K_P) \textbf{e}_{t} 
\end{equation}
where $\textbf{e}_{t} = \bm{x}^d_{t} - \bm{x}_{B,t}$. The error dynamics is asymptotically stable when the condition $\vert 1 - k_{P,i} \vert < 1$ is satisfied  for $i\in\{1,2\}$.

\subsection{Second Order Model Control}\label{Sect:SecondOrderControl}

Given the nominal model in eq. (\ref{Eq:SecondOrderInverseModelWithOffset}), the following control law
\begin{align}\label{Eq:SecondOrderControlScheme}
\bm{v}_{t} &= {\bf g}_D(\bm{u}_{t+1}, \Delta\bm{q}_{B,t},\varphi_t).
\end{align}
implements a feedback linearization control scheme on the trajectory $\bm{x}_{B,t}$. 
In fact a simple trajectory tracking controller can be obtained with
\begin{align}\label{Eq:SecondOrderControlLaw}
\bm{u}_{t+1} &=\Delta\bm{x}^d_{t+1} + K_D(\Delta\bm{x}^d_{t} - \Delta\bm{x}_{B,t}) + K_P(\bm{x}^d_{t} - \bm{x}_{B,t})
\end{align}
where $K_D = {\rm diag}(k_{D,i})$ and $K_P = {\rm diag}(k_{P,i})$, with $k_{D,i} \in \mathbb{R}$ and $k_{P,i} \in \mathbb{R}$, for $i\in\{1,2\}$, are the 2$\times$2 diagonal gain matrices of the controller.
A block diagram representing this control law is sketched in Figure~\ref{Fig:FigSecondOrderControlScheme}. 

If one plugs eqs~(\ref{Eq:SecondOrderControlScheme})--(\ref{Eq:SecondOrderControlLaw}) into eq.~(\ref{Eq:SecondOrderConsistency}), the following equation is obtained 
\begin{equation}
\Delta\bm{x}_{B,t+1} =\Delta\bm{x}^d_{t+1} + K_D(\Delta\bm{x}^d_{t} - \Delta\bm{x}_{B,t}) + K_P(\bm{x}^d_{t} - \bm{x}_{B,t}).
\end{equation}
By exploiting the fact that $\Delta\bm{x}^d_{t} -\Delta\bm{x}_{B,t} = \textbf{e}_{t+1} - \textbf{e}_{t}$, where $\textbf{e}_{t} = \bm{x}^d_{t} - \bm{x}_{B,t}$, one can immediately obtain the following error dynamics 
\begin{equation}
\textbf{e}_{t+2} + (K_D-1)\textbf{e}_{t+1} + (K_P-K_D) \textbf{e}_{t} = 0.
\end{equation} 
whose system poles are 
\begin{equation}
\lambda^i_{1,2} = \frac{ -(k_{D,i}-1) \pm \sqrt{(k_{D,i}-1)^2 - 4(k_{P,i} - k_{D,i})} }{2}  \hspace{0.5cm} 
\end{equation}
for $i \in \{1,2\}$. The previous equation can be exploited to suitably select the control gains $k_{D,i}$ and $k_{P,i}$ in order to guarantee the system asymptotic stability condition ${\vert \lambda^i_{1,2} \vert < 1}$ for $i\in\{1,2\}$.

\typeout{LEARNING}
\begin{figure}
\centering
\subfigure[]{\includegraphics[width=\columnwidth]{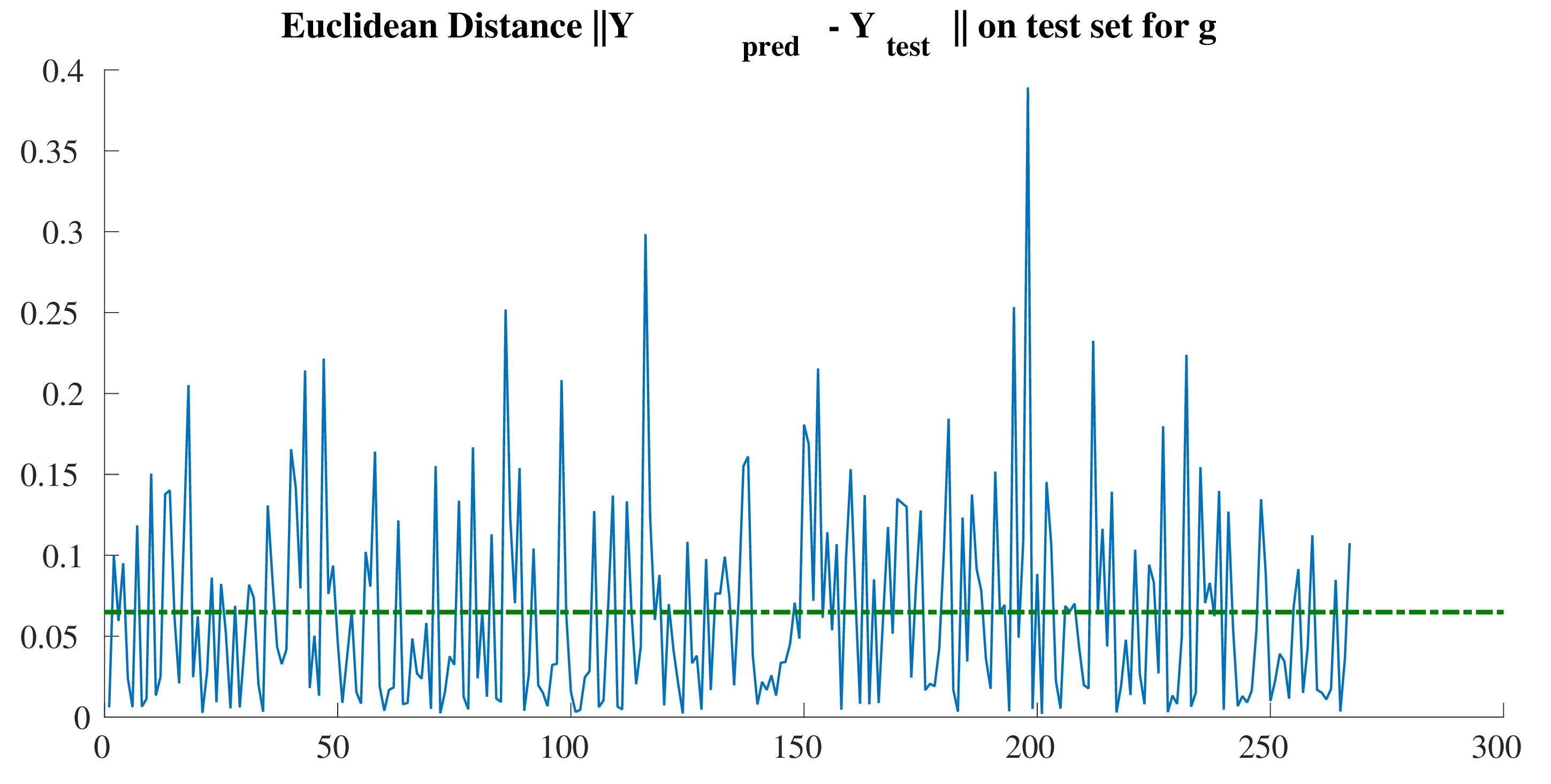}}
\subfigure[]{\includegraphics[width=\columnwidth]{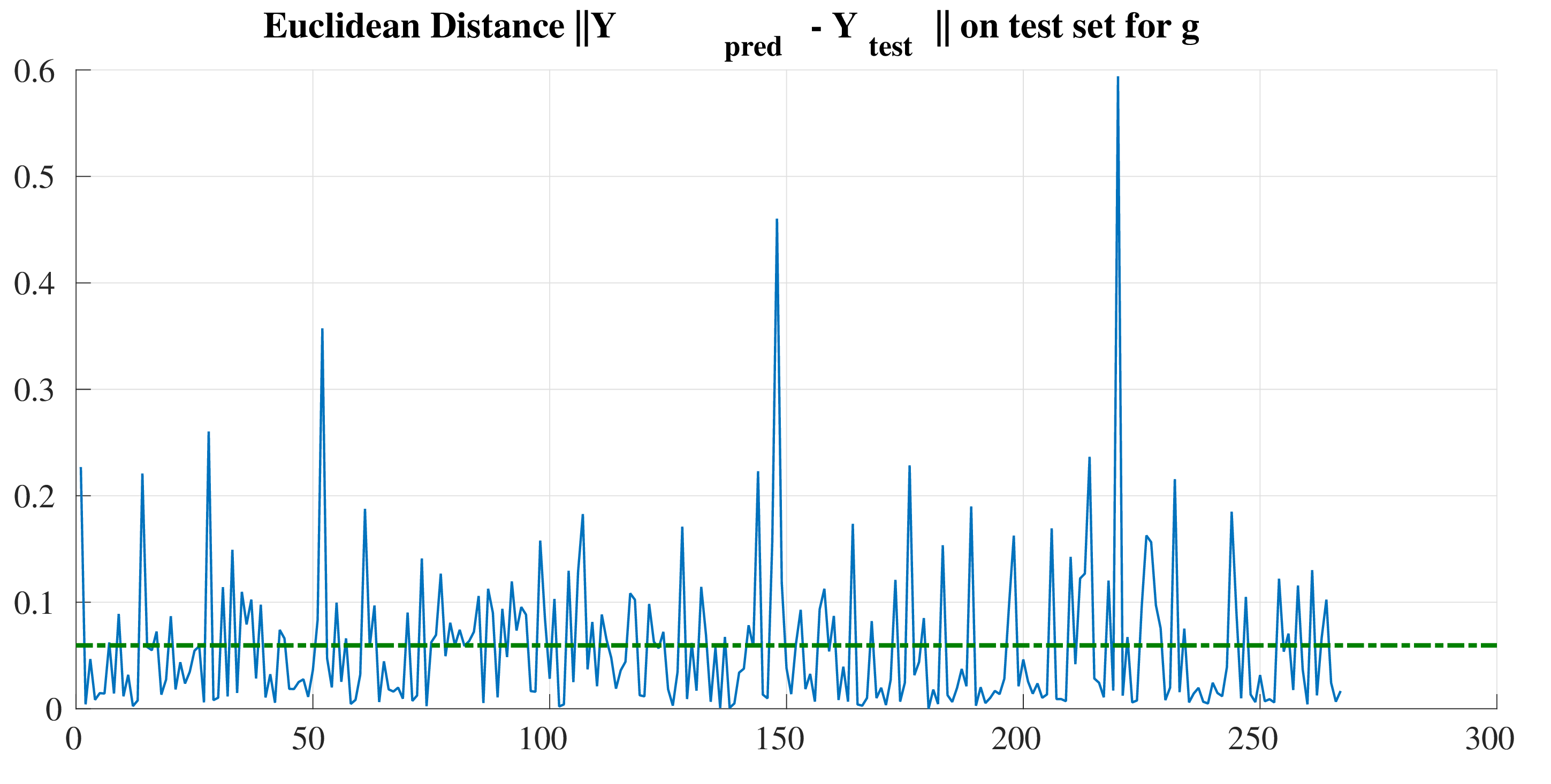}}
\caption{(a) Prediction accuracy on the test set of the CI-GPs with data collected while the robot is tracking a figure-8 reference trajectory on a horizontal surface. Average error $0.007$; (b) Prediction accuracy on the test set of the CI-GPs with data collected while the robot is tracking a free-form trajectory lying on a tilted plane with friction. Average error $0.059$.}
\label{fig:acc_pred}
\end{figure}

\section{CI-GP Model estimation}\label{sect:cigp}

In this section we consider the identification of the system variables given in eq.~(\ref{Eq:SecondOrderInverseModelWithOffset}) and  the nominal model control law given in eq.~(\ref{Eq:SecondOrderControlScheme}). Featuring the variables identified in the above equations and the observations from real and simulated data we provide a stochastic control model, which compensates the dynamic effects not explicitly accounted for in the nominal model. 

Let $\mathcal{D}{=}\{\langle{\bf z}_{i},\!{\bf w}_{i}\rangle\}_{i{=}1}^N$ be the set of observations, collected according to the identification of the system variables specified in eq. (\ref{Eq:SecondOrderInverseModelWithOffset}). Here
\begin{align}
{\bf z}_{i}{=}\begin{bmatrix}
v_l \\ v_r
\end{bmatrix}{\in}\mathbb{R}^2 \quad \text{and} \quad {\bf w}_{i}{=}\begin{bmatrix}
\Delta\bm{x}_{B,t+1} \\ \Delta\bm{q}_{B,t} \\ \varphi_t
\end{bmatrix}{\in}\mathbb{R}^6
\end{align}



Let $j\in \{1,2\}$, we assume that $z_{i,j}\in {\bf z}_i$ is the outcome of a latent function $\zeta_j\!:\mathbb{R}^{6}{\rightarrow}\mathbb{R}$ on ${\bf w}_{i}$, corrupted by Gaussian noise $\varepsilon_j$, that is
\begin{align}\label{eq:latentF}
z_{i,j} & =  \zeta_j\!\left({\bf w}_{i}\right) + \varepsilon_j \,\, \text{with} \,\, \varepsilon_j{\sim}\mathcal{N}(0,\sigma^2_j)
\end{align} 
for every $i{=}1,\ldots,N$.

A GP prior with zero mean and covariance function $\kappa_{j}\!\left(\cdot,\cdot\right)$ is placed over each latent function $\zeta_j\!\left(\cdot\right)$. More precisely
\begin{align}\label{eq:gp_prior}
\zeta_j\!\left({\bf w}\right) & \sim \text{GP}_j\!\left(0,\kappa_{j}\!\left({\bf w},{\bf w}'\right)\right)
\end{align}

On the basis of the assumptions in both eq. (\ref{eq:latentF}) and eq. (\ref{eq:gp_prior}) it follows that, for each $j{\in}\{1,2\}$
\begin{align}\label{eq:cigp}
{\bm \zeta}_j|{\bf W} \sim \mathcal{N}\!\left({\bf 0},{\bf K}_{j}\right)  \,\, \text{and} \,\, {\bf z}_{j}|{\bf W} & \sim \mathcal{N}\!\left({\bf 0},{\bf K}_j + \sigma^2_j{\bf I}\right)
\end{align}
with
\begin{align}
{\bm \zeta}_j{\triangleq} \begin{bmatrix} \zeta_j\!\left({\bf w}_{1}\right) \\ \vdots \\ \zeta_j\!\left({\bf w}_{N}\right) \end{bmatrix}\!, \quad {\bf z}_j{\triangleq}\begin{bmatrix} z_{1,j} \\ \vdots \\  z_{N,j} \end{bmatrix} \,\, \text{and} \,\, {\bf W}{=}\begin{bmatrix} {\bf w}_{1}^{\top} \\ \vdots \\ {\bf w}_{N}^{\top}\end{bmatrix}^{\top} \nonumber
\end{align}
${\bf K}_j$ is the kernel matrix whose entries ${\bf K}_j\!(u,v){=}\kappa_{j}\!\left({\bf w}_{u},{\bf w}_{v}\right)$.

Given a query input ${\bf w}_{\star}$, the model in eq.~(\ref{eq:cigp}) returns an estimate ${\bf z}_{\star}{\triangleq}\begin{bmatrix} z_{*,1} & \cdots & z_{*,M} \end{bmatrix}^{\top}$ such that
\begin{align}
z_{\star,j} = {\bm \kappa}_{\star,j}^{\top}\!\left({\bf K}_j{+}\sigma^2_j{\bf I}\right)^{{-}1}\!{\bf z}_j~, \quad\,j{\in}\{1,2\}
\end{align}
Here ${\bm \kappa}_{\star,j}{\triangleq}\begin{bmatrix}
\kappa_j\!\left({\bf w}_{1},\!{\bf w}_{\star}\right) & \cdots & \kappa\!\left({\bf w}_{N},\!{\bf w}_{\star}\right)\end{bmatrix}^{\top}$ is the vector of covariances between the query point ${\bf w}_{\star}$ and the $N$ training points in $\mathcal{D}$.

Finally, learning of the hyperparameters $\Theta_j$ of each GP$_j$ (which vary according to the chosen kernel function $\kappa_j\!\left(\cdot,\cdot\right)$), is performed separately and independently, for each component $j$, via the maximization of the marginal log-likelihoods of the outputs ${\bf z}_{j}$ given the inputs ${\bf w}_{1},\ldots,{\bf w}_{N}$, that is
\begin{align}\label{eq:cigp_opt}
\Theta_j^{\text{max}}{=}\underset{\Theta_j}{\mathrm{argmax}}\{(\log\!\left(p({\bf z}_{j}|{\bf W})\right)\} \,\,\text{for}\,\,j{\in}\{1,2\}
\end{align} 
with
\begin{align}\label{eq:loglikelihood}
\log\!\left(p({\bf z}_{j}|{\bf W})\right){=} & -\frac{1}{2}{\bf z}_{j}^{\top}\!\left({\bf K}_j{+}\sigma^2_j{\bf I}\right)^{{-}1}\!{\bf z}_j \nonumber \\
& \,\, {-}\frac{1}{2}\log\vert{\bf K}_j{+}\sigma^2_j{\bf I}\vert {-}\frac{N}{2}\log 2\pi
\end{align}
Maximization of the log likelihood in eq. (\ref{eq:loglikelihood}) can be done by using efficient gradient-based optimization algorithms such as conjugate gradients~\cite{Rasmussen:2005}

In Section~\ref{sect:experiments}, we report both the prediction accuracy of the CI-GP together with its performance within the control schema, which is described in the next section.


\typeout{RESULTS}

\begin{figure}
\centering
\subfigure[]{\includegraphics[height=4.5cm]{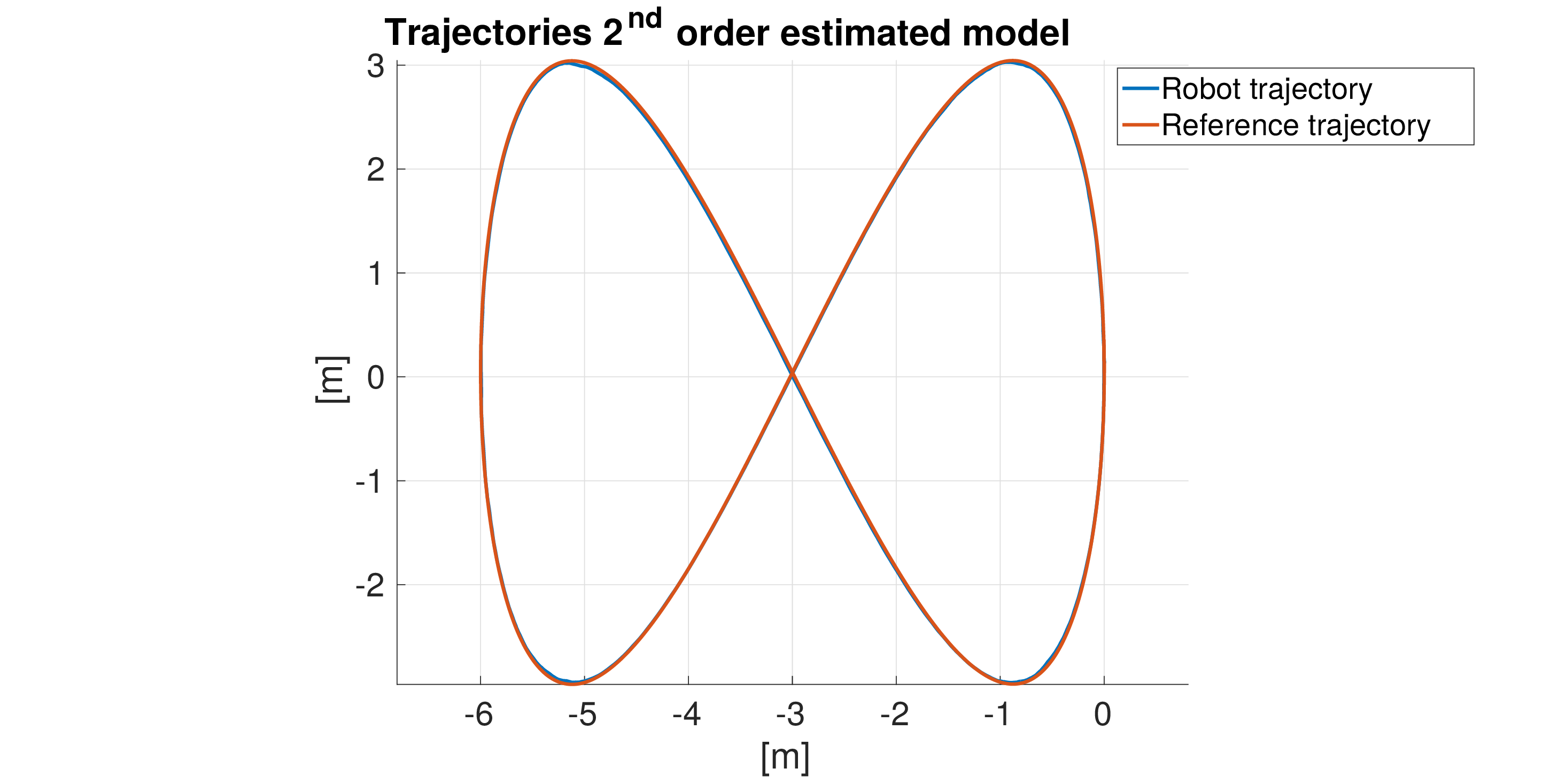}}\quad
\subfigure[]{\includegraphics[height=4.5cm]{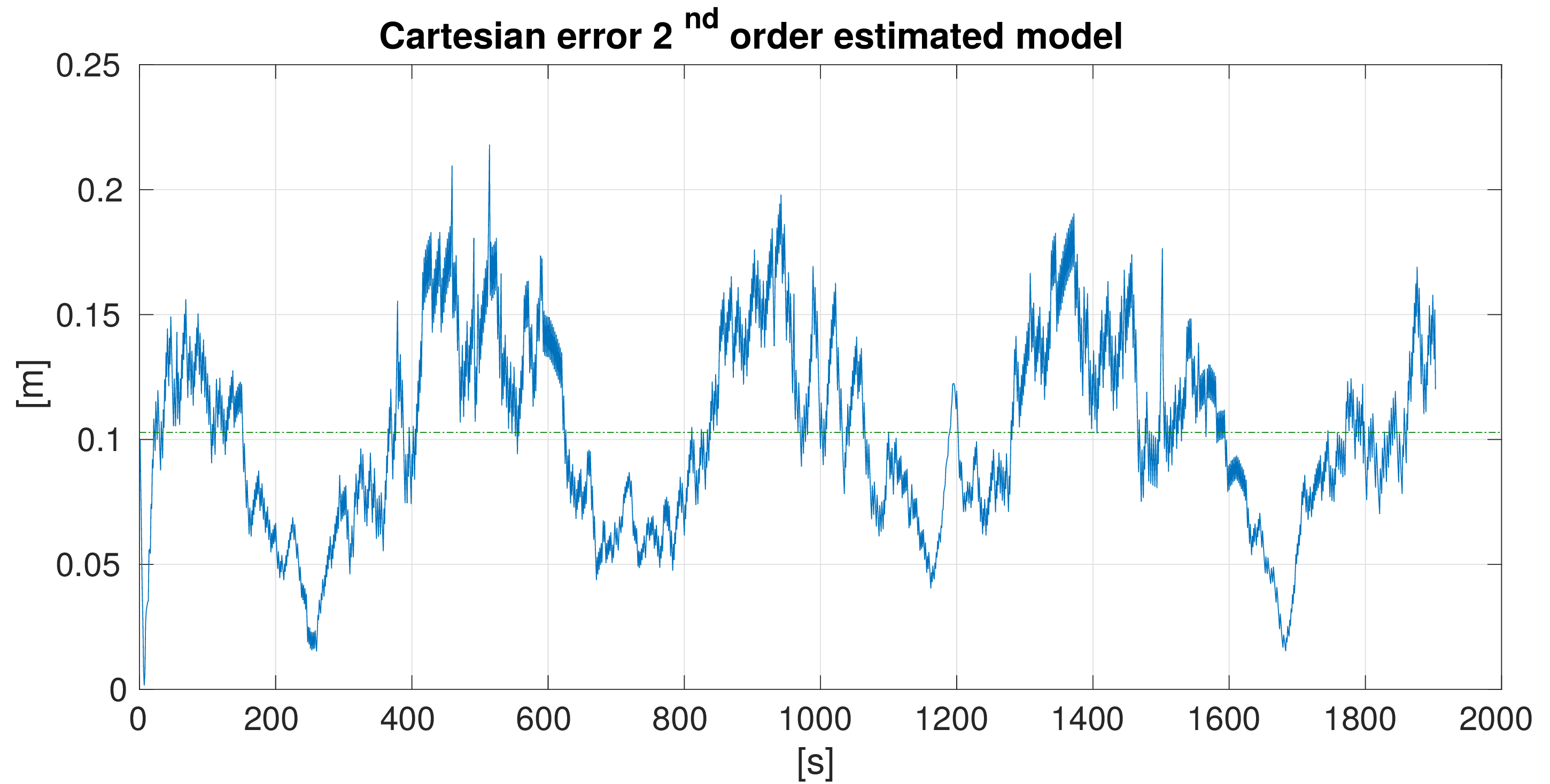}}
\caption{(a) Path effectively followed by the robot (blue line), driven by a feedback control low based on the second order model, estimated via GPs, with respect to the figure-8 reference trajectory (red line). (b) Cartesian error of the estimated model along the reference. Error average $0.1028$ $m$.}
\label{fig:case8}
\end{figure}

\begin{figure}
\centering
\subfigure[]{\includegraphics[height=4.5cm]{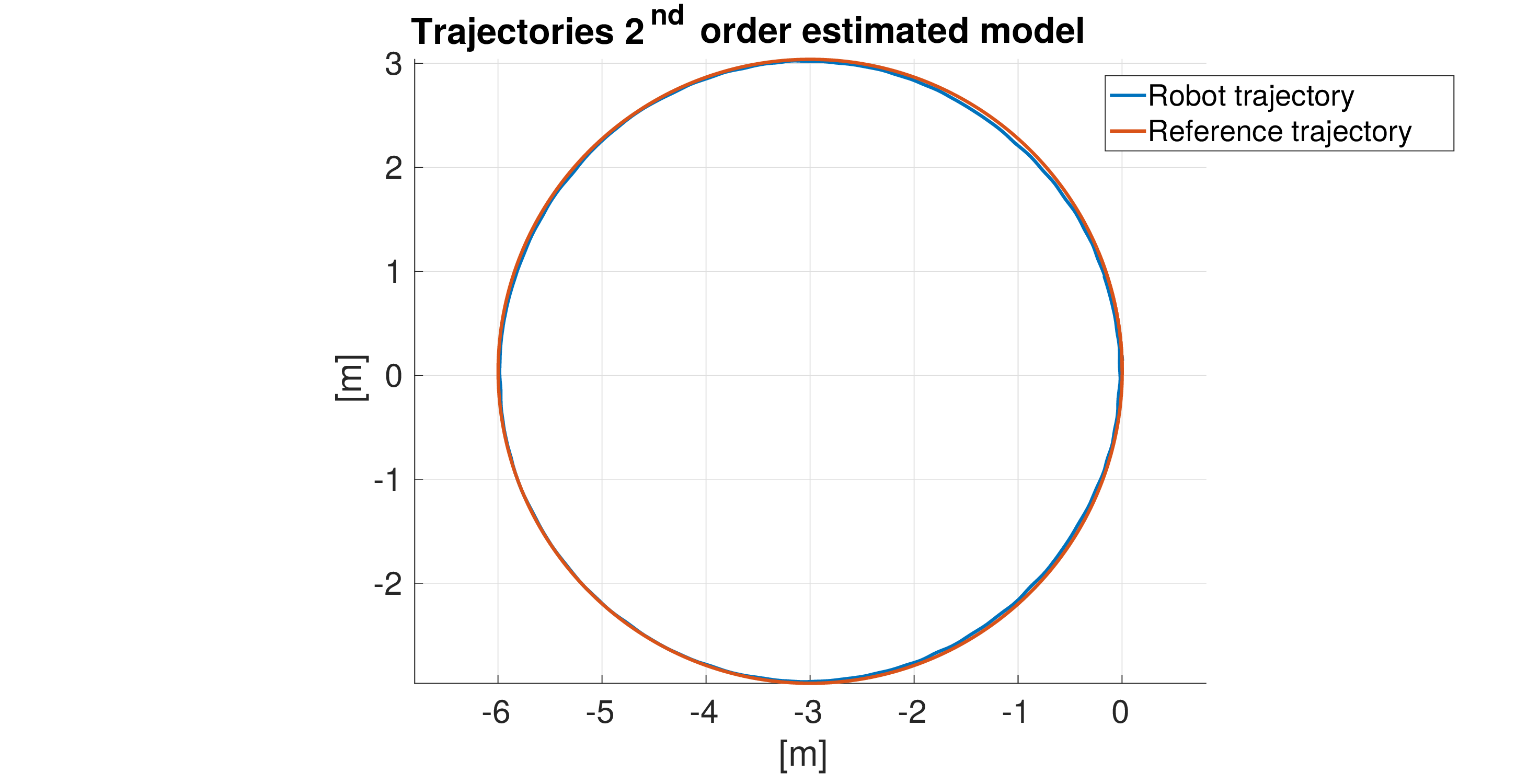}}\quad
\subfigure[]{\includegraphics[height=4.5cm]{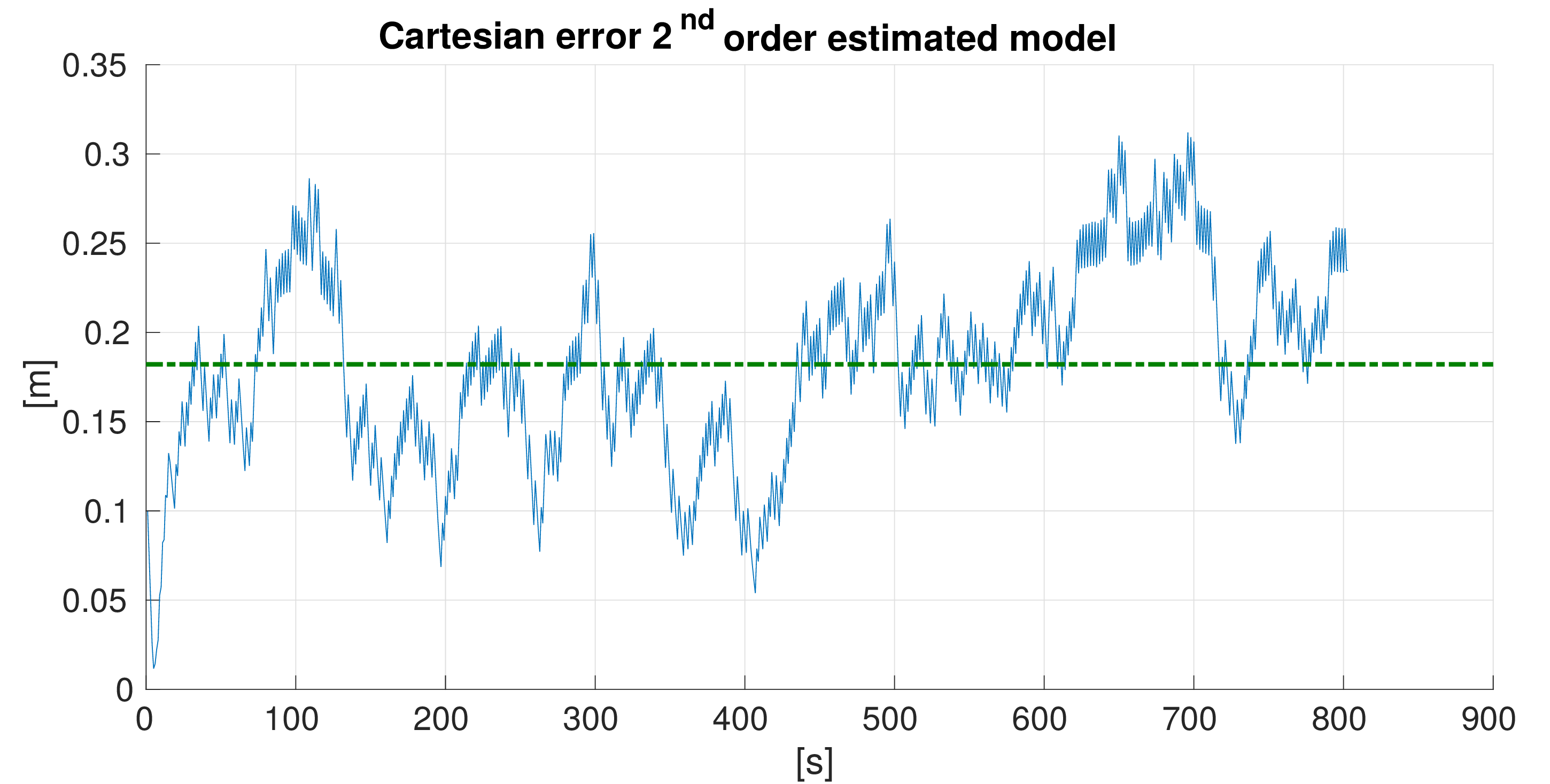}}
\caption{(a) Path effectively followed by the robot (blue line), driven by a feedback control low based on the second order model, estimated via GPs, with respect to the circular reference trajectory (red line). (b) Cartesian error of the estimated model along the reference. Error average $0.1821$ $m$.}
\label{fig:case10}
\end{figure}

\begin{figure}
\centering
\subfigure[]{\includegraphics[height=4.5cm]{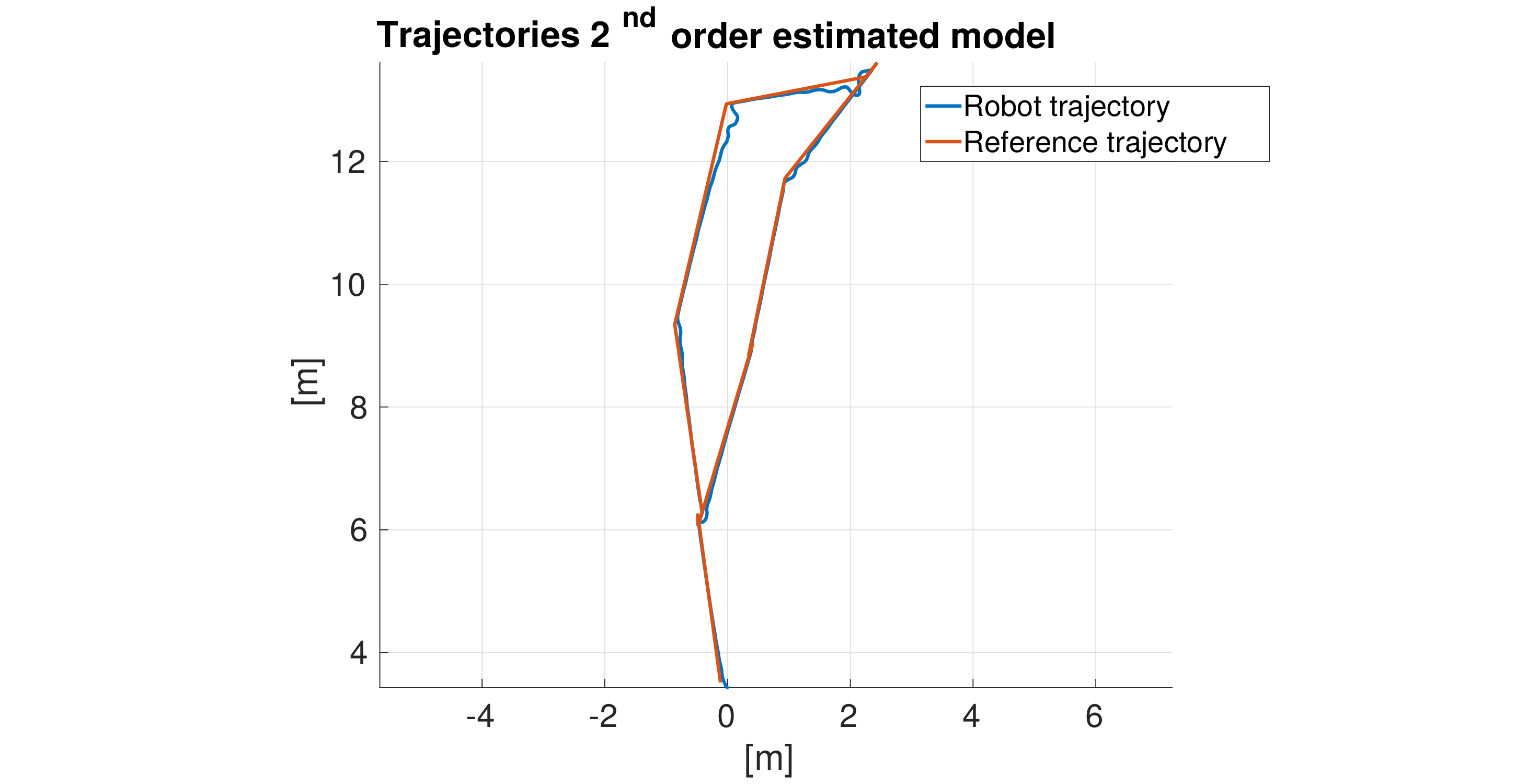}}\quad
\subfigure[]{\includegraphics[height=4.5cm]{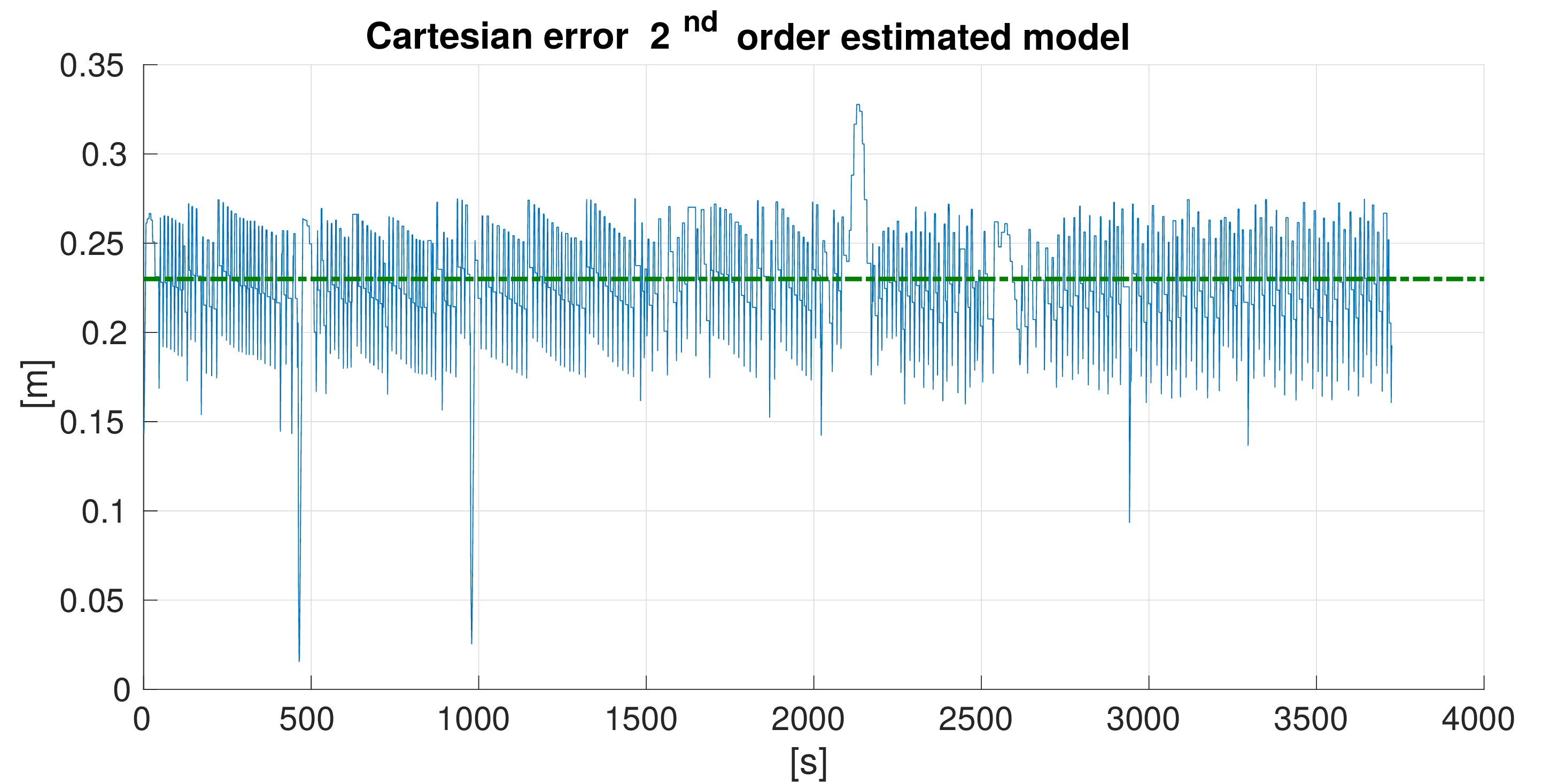}}
\caption{(a) Path effectively followed by the robot (blue line), driven by a feedback control low based on the second order model, estimated via GPs, with respect to a free-form reference trajectory (red line). (b) Cartesian error of the estimated model along the reference. Error average $0.23$ $m$.}
\label{fig:case12}
\end{figure}

\begin{figure}
\centering
\subfigure[]{\includegraphics[height=4.5cm]{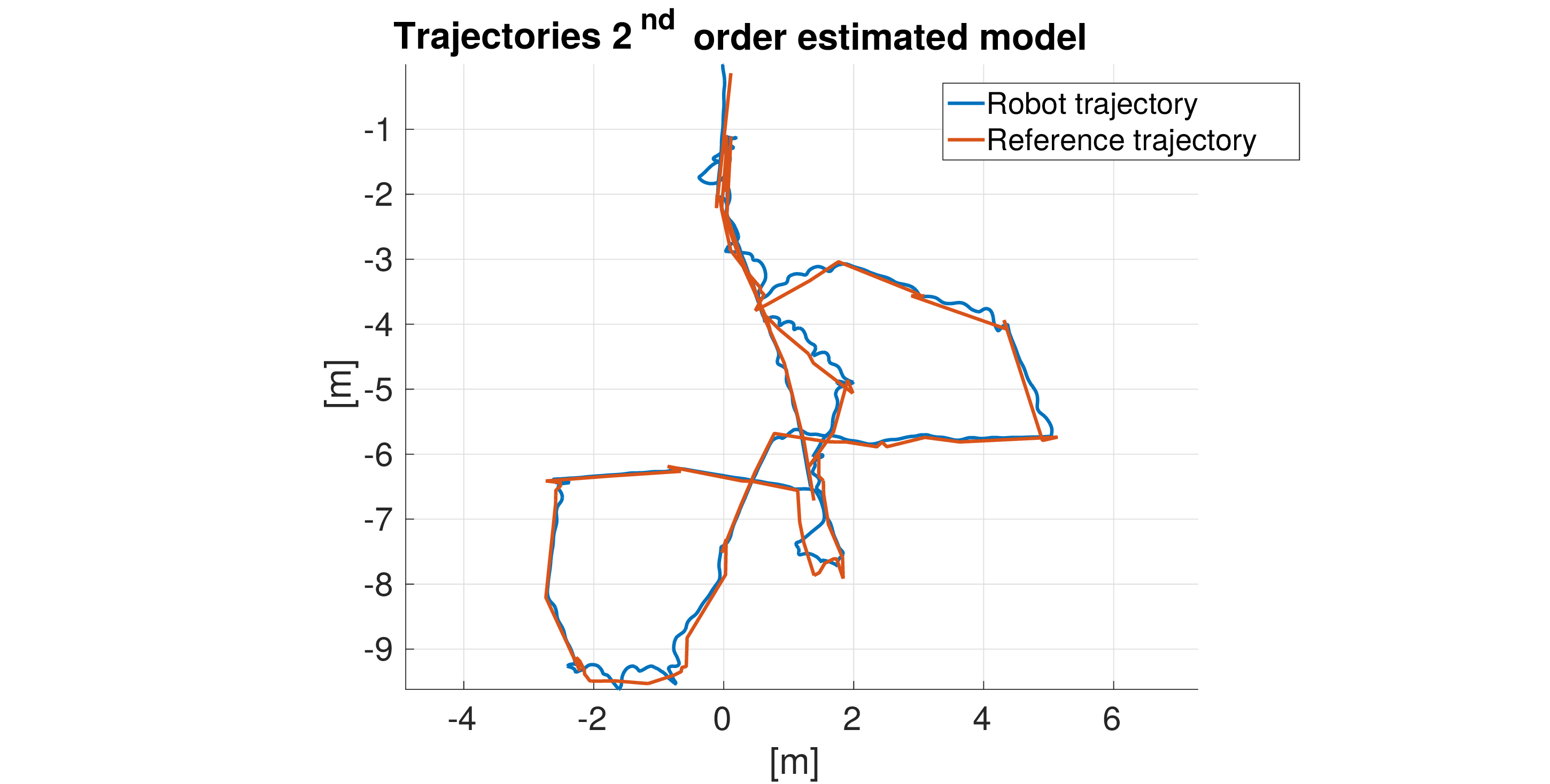}}\quad
\subfigure[]{\includegraphics[height=4.5cm]{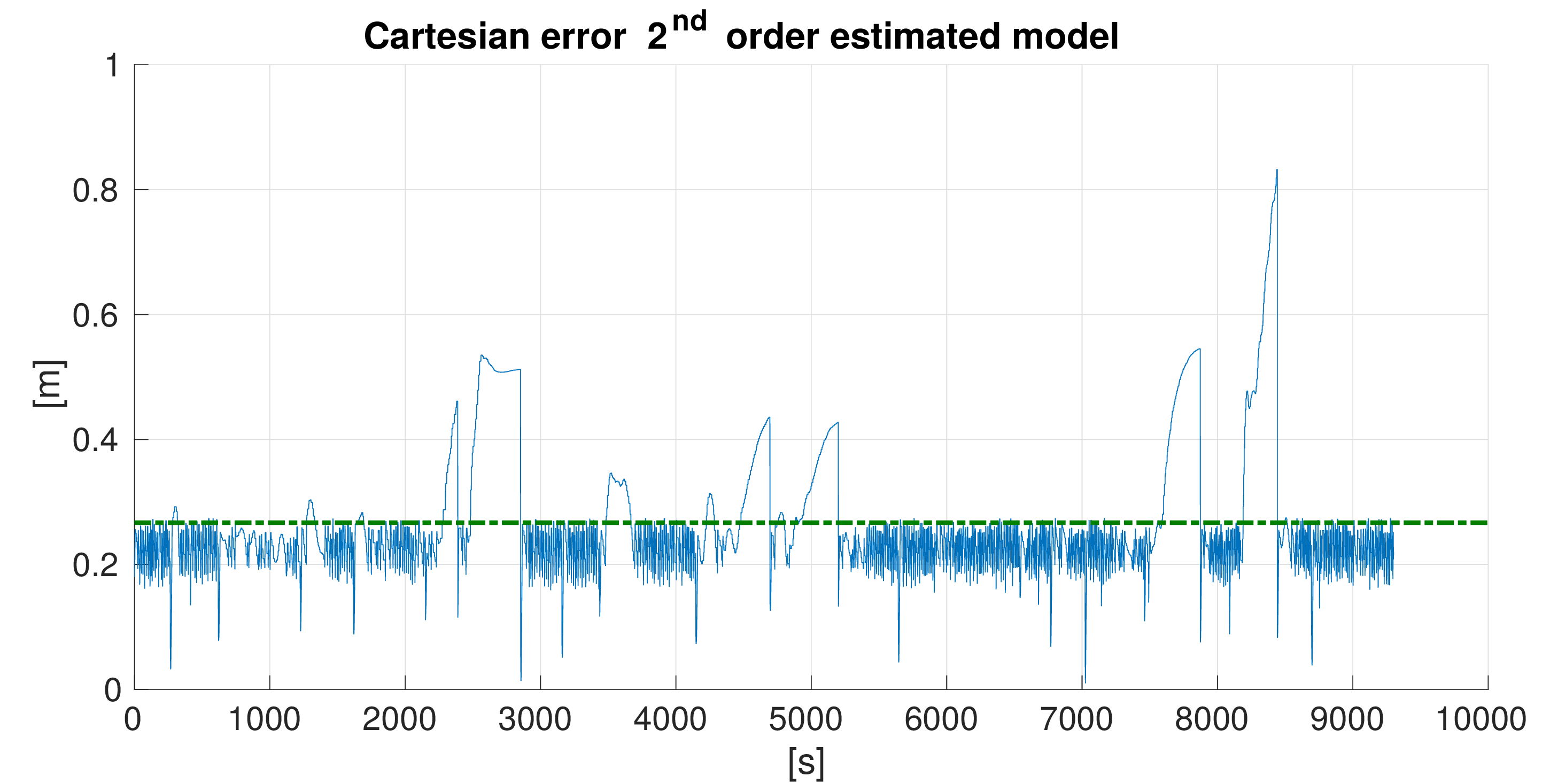}}
\caption{(a) Path effectively followed by the robot (blue line), driven by a feedback control low based on the second order model, estimated via GPs, with respect to a free-form reference trajectory (red line), on an inclined surface with friction. (b) Cartesian error of the estimated model along the reference. Error average $0.288$ $m$.}
\label{fig:case11}
\end{figure}

\begin{figure}
\centering
\subfigure[]{\includegraphics[width=\columnwidth]{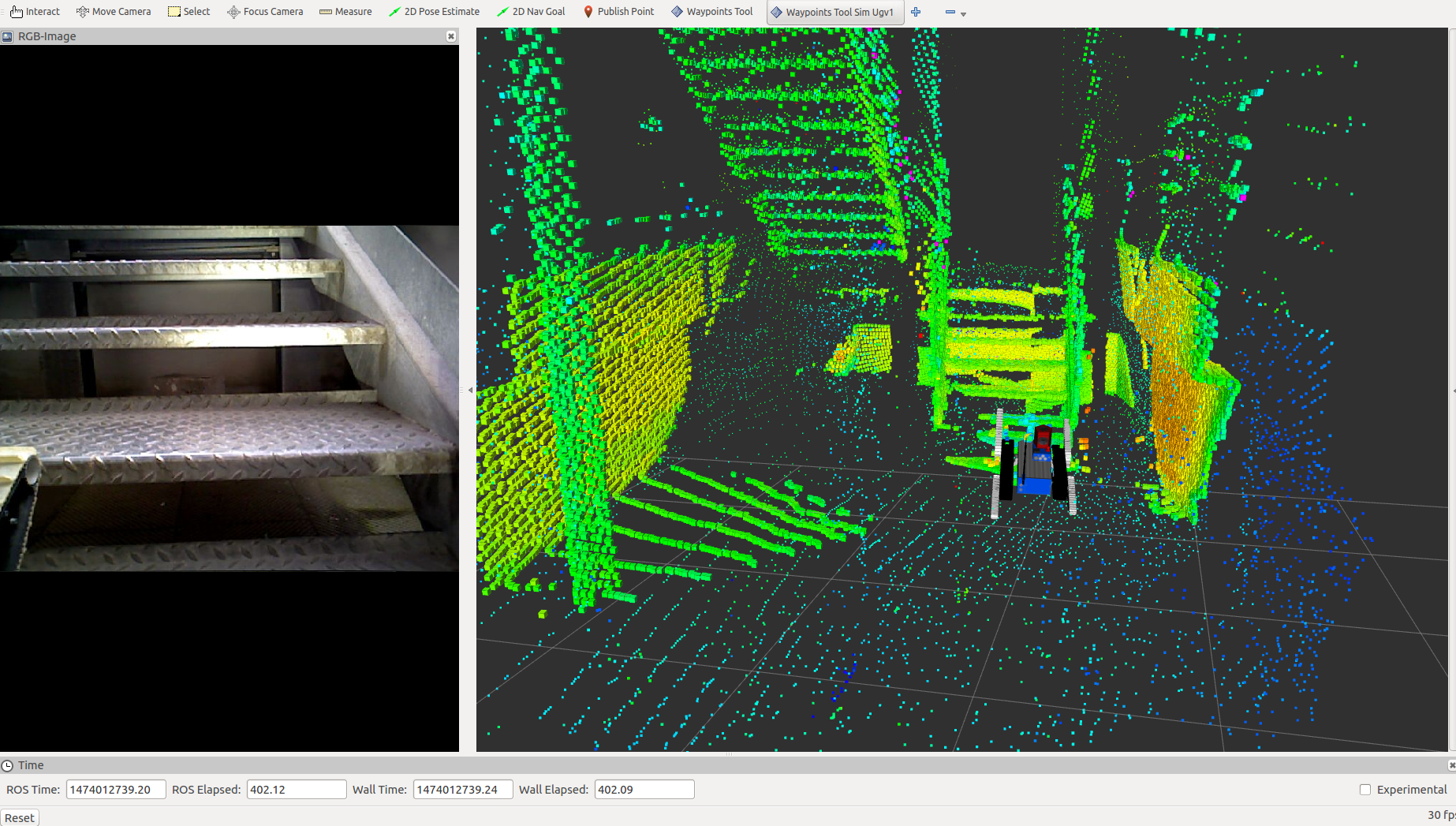}}
\subfigure[]{\includegraphics[width=\columnwidth]{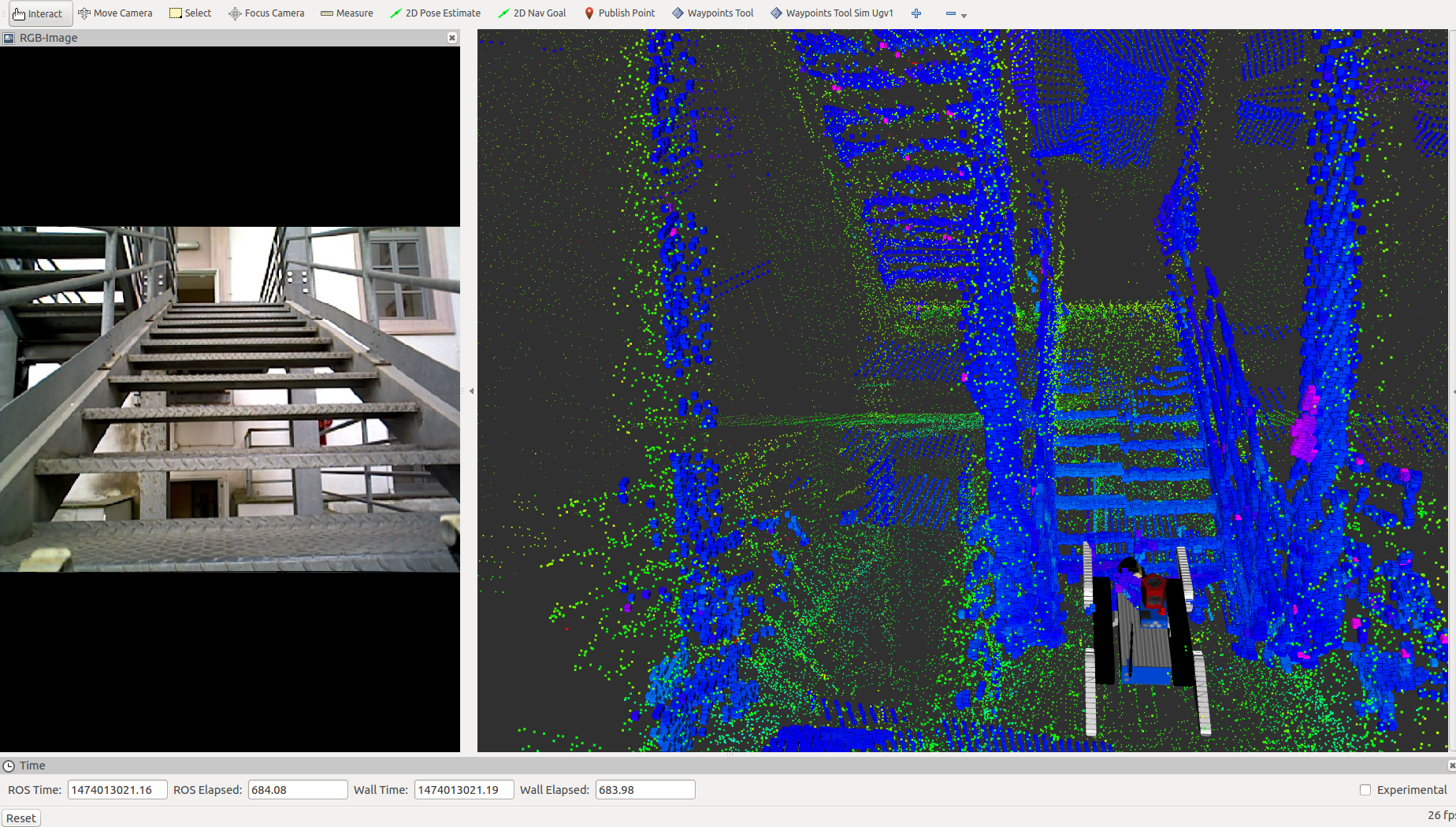}}
\caption{The tracked vehicle is climbing the metal staircase, controlled by the trajectory controller endowed with the CI-GP.}
\label{fig:stairs}
\end{figure}

\section{Results}\label{sect:experiments}
In this section we describe the experiments performed in both real and virtual environments for evaluating the prediction accuracy of the CI-GP in Section~\ref{sect:cigp} and for measuring the performance of the designed trajectory tracking controller, when the CI-GP is integrated into the control schema. For the real world experiments we provide only the prediction accuracy of the estimated model and some snapshots (see \ref{fig:stairs}), due to the lack of a ground truth to evaluate the controller performance. 

Data for estimating the parameters of the CI-GPs have been collected by giving as input to the trajectory tracking controller (see eq. (\ref{Eq:SecondOrderControlScheme})) several reference trajectories, belonging to three main classes: (1) figure-8 (see Figure~\ref{fig:case8}(a)); (2) circular (see Figure~\ref{fig:case10}(a)) and (3) free-form (see Figure~\ref{fig:case12}(a)). We also varied the inclination of the surface on top of which the reference trajectories lie (see Figure~\ref{fig:acc_pred}). In order to simulate the track-soil interaction we also varied the friction coefficient of the terrain surfaces.
During each tracking task, the measurements of the system variables have been taken according to the relation specified in eq. (\ref{Eq:SecondOrderInverseModelWithOffset}). The values of $k_{D,i}$ and $k_{P,i}$, for $i {\in} \{1,2\}$, along the diagonal of the matrices $K_D$ and $K_P$ (see eq. ({\ref{Eq:SecondOrderControlLaw}})) are set equal to $0.05$ and $0.02$, respectively. The forgetting factor $\alpha$ is equal to $0.1$ and, finally, the sample frequency $T_s{=}20$ $Hz$. 

Figure~\ref{fig:acc_pred}(a) reports the trend of the prediction accuracy of the CI-GPs on a test set extracted from data collected while the robot is tracking a figure-8 reference trajectory on a horizontal surface. Figure~\ref{fig:acc_pred}(b) shows the accuracy of the CI-GPs on a test set collected on a free-form trajectory lying on a tilted plane with friction. This accuracy is measured by computing for each point of the test set the Euclidean norm.   

After the estimation of the parameters of the CI-GP, the model has been grounded into the control schema in Figure~\ref{Fig:FigSecondOrderControlScheme}. The performance of this hybrid control law is measured with respect to the Cartesian error between the path effectively tracked by the controller under these control law on a set of test reference trajectories.

Figure~\ref{fig:case8}(b) shows the trend of the error of the controller, along a figure-8 reference trajectory (see Figure~\ref{fig:case8}(a)). 
Figure~\ref{fig:case10}(b) reports the Cartesian along a circular reference trajectory (see Figure~\ref{fig:case10}(a)).
On the other hand, Figure~\ref{fig:case12}(b) shows the performance of the hybrid control law while the robot is instructed to follow a free-form reference trajectory (see Figure~\ref{fig:case12}(a)). On the figure-8 and on the circular trajectories performance of this controller are very promising (the average error is equal to $0.1028$ $m$, $0.1821$, respectively). Conversely, on the free-form reference trajectory the performance decay. 

Figure~\ref{fig:case11} shows the performance of the controller driving the robot along a free-form reference trajectory generated on a surface, tilted of $35$ degrees, with friction coefficient $\mu_{d}{=}0.6$. In the presence of additional forces induced by the tilted reference trajectory, the controller achieves an average error of $0.288$ $m$.

Last experiment has been performed by the real robot during tracking of a reference path lying on a metal stair-case (see Figure~\ref{fig:stairs}). On this trajectory the average error of the controller is equal to $0.0233$ $m$.

\begin{figure}[t!]
\centering 
\includegraphics[width=\columnwidth]{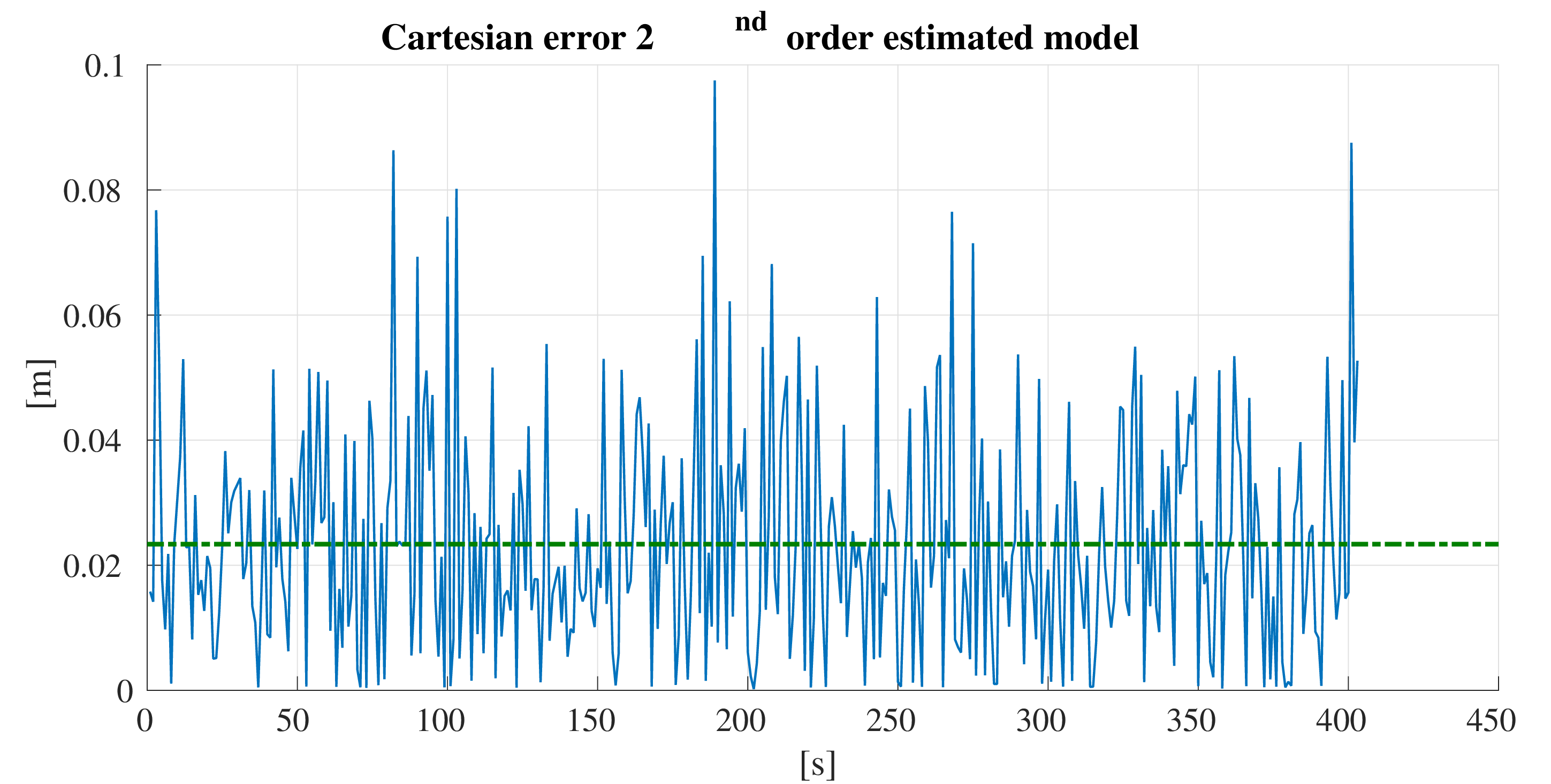}
\caption{Cartesian error of the trajectory control law grounding the CI-GP, along a trajectory passing over a metal staircase. Average error $0.0233$ $m$.}
\label{fig:acc_stairs}
\end{figure}
\typeout{CONCLUSIONS}
\section{Conclusions}

In this work we presented a new hybrid approach for designing a trajectory tracking controller for tracked vehicles. 
The advantage of the proposed approach is that it combines a great accuracy in the prediction and actual control with a real time behavior over several classes of trajectories. The system has been experimented both in real world environments, including stairs and tilted planes at various inclination degrees, and simulated environments, where ground truth is available. 

Here the nominal model has the main role of supporting the identification of that set of variables, which regulate the systems and that have to be used during the estimation process.
Conversely, the estimated dynamics integrates the feedback of the nominal model together with its prediction to cope with not modeled dynamic effects.   From the experiments, it turns out that the combination of the learned model and the control law behaves better than the nominal model in all environments, therefore this indicates that using the nominal model specifically as partial prior knowledge is a promising direction. We shall further investigate how to improve the feedback control to obtain good performance on a wider set of terrain classes.


\section{Appendix}

\section{3D Kinematic Model}\label{Sect::3DKinematicModel}

\begin{figure}[!t]
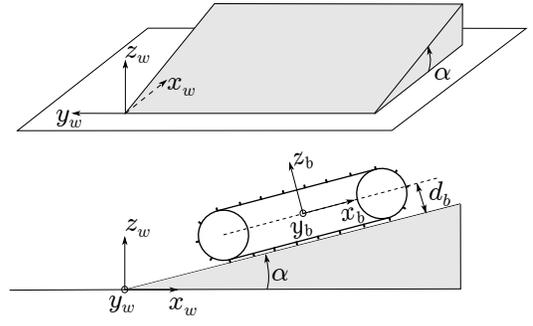

\begin{center}
\FigTiltedPlane \caption{{\em Top}: local plane with slope angle $\alpha$ and the world frame ${O_w-x_wy_wz_w}$. {\em
Bottom}: the tracked vehicle lying on the local plane $\bm{\pi}$ and the vehicle body frame $O_b-xyz$. The height $d_b$ defines the distance of the robot center from its contact surface.} \label{Fig:FigTiltedPlane}
\end{center}
\end{figure}

In this section, we consider a ground robot which moves on a 3D piecewise planar terrain. Without loss of generality\footnote{By possibly using a coordinate transformation.}, assume the robot lies on a local supporting plane $\bm{\pi}$, whose normal unit vector is $\bm{n}=[-\sin\alpha, 0 , \cos\alpha]^T$ (see Fig. \ref{Fig:FigTiltedPlane}). In this case, the plane equation is $\bm{n}^T \bm{p}=0$, where $\bm{p} \in \mathbb{R}^3$. Denote by $\bm{R}_s(\gamma) \in SO(3)$ an elementary rotation of an angle $\gamma$ about axis $s$. The homogeneous transformation matrix from robot body frame to world frame is
\begin{equation}
\bm{T}^w_b =
\begin{bmatrix}
\bm{R}^w_{b}  & \bm{p}^w_b \\
 \bm{0}^T & 1
 \end{bmatrix}
\end{equation}
where ${\bm{p}^w_b = [x, y, z]^T \in\mathbb{R}^3}$ specifies the position of the robot center w.r.t. world frame, ${\bm{R}^w_{b} = \bm{R}_z(\varphi) \bm{R}_y(\theta)\bm{R}_x(\psi) \in SO(3)}$ is the body frame rotation matrix w.r.t. world frame, and $\varphi, \theta, \psi \in  \mathbb{R}$ are respectively yaw, pitch and roll angles. One can rewrite $\bm{R}^w_{b}  = [\bm{u}_b,\bm{v}_b,\bm{w}_b]$, where $\bm{u}_b$, $\bm{v}_b$, $\bm{w}_b$ are unit vectors.
Since the robot lies on $\bm{\pi}$, one has $(i)$ $\bm{n}^T \bm{p}^w_b=d_b$  (see Fig. \ref{Fig:FigTiltedPlane}) which entails
\begin{equation}
z = \dfrac{d_b + x \sin\alpha }{\cos\alpha}
\end{equation}
and $(ii)$ $\textbf{n}^T \textbf{u}_b=0$ from which
\begin{equation}\label{Eq:ThetaEquation}
\theta = \tan^{-1}(-\tan \alpha \cos \varphi)   
\end{equation}
and $(iii)$ $\textbf{n}^T \textbf{v}_b=0$ which in turn entails
\begin{equation}
\psi = \tan^{-1}\left(   \dfrac{\tan \alpha \sin \varphi}{\tan \alpha \cos\varphi \sin \theta - \cos \theta} \right) 
\end{equation}
The previous equations show that, knowing the plane $\bm{\pi}$ representation, the vector $\bm{q} = [x,y,\varphi]^T \in SE(2)$ completely specifies the 3D robot configuration $[x,y,z,\varphi,\theta, \psi]^T \in SE(3)$. In fact, one can easily compute $z=z(\bm{q})$, $\theta=\theta(\bm{q})$ and $\psi=\psi(\bm{q})$.


\subsection{Velocities Transformations}\label{Sect:VelocitiesTransformation}

Let ${O_w-x_py_pz_p}$ be the frame obtained by rotating ${O_w-x_wy_wz_w}$ with $\bm{R}_y(-\alpha)$. This aligns the $z_w$ axis with the normal $\bm{n}$. Denote by $[x_p,y_p,z_p,\varphi_p,\theta_p, \psi_p]^T$ the robot configuration expressed w.r.t. ${O_w-x_py_pz_p}$. Since $\dot {z}_p \equiv 0$, one has
\begin{equation}
\begin{bmatrix}
\dot x\\ 
\dot y\\
\dot z 
\end{bmatrix} =
\begin{bmatrix}
\dot{x}_p  \cos\alpha \\ 
\dot{y}_p \\
\dot{x}_p \sin\alpha
\end{bmatrix}
\end{equation}
Moreover, since $\bm{R}^w_{p} = \bm{R}^w_{b} \bm{R}^b_{p} = \bm{R}_y(-\alpha) \bm{R}_z(\varphi_p)$ the following identities hold for the unit vector $\bm{u}_b$ components
\begin{equation}
\begin{bmatrix}
\cos\alpha \cos\varphi_p\\ 
\sin\varphi_p\\
\sin\alpha  \cos\varphi_p
\end{bmatrix} =
\begin{bmatrix}
\cos\varphi \cos\theta\\ 
\sin\varphi \cos\theta\\
-\sin\theta
\end{bmatrix}.
\end{equation}
These equations entail $\tan \varphi = \tan\varphi_p / \cos\alpha$ and hence
\begin{equation}
\dot \varphi = \dot\varphi_p \frac{\cos^2\varphi}{\cos\alpha\cos^2\varphi_p} = \dot\varphi_p \frac{\cos\alpha}{\cos^2\theta}
\end{equation}

\subsection{Tracked Vehicle Kinematics}\label{Sect:TrackedVehicleKinematics}

The 3D kinematic model of a vehicle moving on the plane $\bm{\pi}$ can be described by the following equations \cite{Endo:2007}
\begin{align}
\dot x_p &= \frac{v_r(1-a_r) + v_l(1-a_l)}{2} \cos\varphi_p \\
\dot y_p &= \frac{v_r(1-a_r) + v_l(1-a_l)}{2} \sin\varphi_p \\
\dot \varphi_p &=  \frac{v_r(1-a_r) - v_l(1-a_l)}{d}.
\end{align}
By using the equations reported in Sect.~\ref{Sect:VelocitiesTransformation}, one has in the world frame 
\begin{align}
\dot x &= \frac{v_r(1-a_r) + v_l(1-a_l)}{2}\cos\varphi \cos \theta \\
\dot y &= \frac{v_r(1-a_r) + v_l(1-a_l)}{2} \sin\varphi \cos\theta \\
\dot \varphi &=  \frac{v_r(1-a_r) - v_l(1-a_l)}{d} \frac{\cos\alpha}{\cos^2\theta}
\end{align}
where $\theta$ is computed by using eq.~(\ref{Eq:ThetaEquation}).
In particular, 
the \emph{slip ratios}  $a_r, a_l \in  \mathbb{R}$ are defined as
\begin{eqnarray}\label{Eq:SlipRatios}
a_r = \frac{v_r - v^{\prime}_r}{v_r}\\
a_l = \frac{v_l - v^{\prime}_l}{v_l}
\end{eqnarray}
The quantities $v^{\prime}_r, v^{\prime}_l \in  \mathbb{R}$ are respectively the actual longitudinal velocities of the right and left tracks. Track $i$ slips if $v^{\prime}_i > v_i$, it skids otherwise. If no slippage occurs, one has $v_r = v^{\prime}_r$, $v_l = v^{\prime}_l$ and thus $a_r = 0$, $a_l = 0$. Therefore, the slip ratios quantitatively characterize the longitudinal slips of the left and right tracks. These are estimated as proposed in \cite{Endo:2007} starting from the actual angular velocity of the vehicle (as estimated from gyro-sensor data) and assuming a given relationship model $a_r/a_l = F(v_r,v_l)$ which is typically specialized as
\begin{equation}
a_r/a_l = -{\rm sign}(v_l \cdot v_r)\left| \dfrac{v_l}{v_r} \right| ^ n
\end{equation}
where $n$ is a parameter that depends on many factors and is experimentally estimated. 

In practice, the lateral slip of the vehicle can be described by using the \emph{slip angle} 
\begin{equation}\label{Eq:SlipAngle}
\beta = \tan^{-1}(v_{by}/v_{bx}).
\end{equation}
where $v_{bx}, v_{by} \in  \mathbb{R}$ are respectively the longitudinal and lateral components of the absolute vehicle velocity w.r.t. its body frame. It is worth noting that the slip angle can also be included in the vehicle kinematic model and can be online estimated by using an EKF \cite{Le:1997}.

Equations (\ref{Eq:SlipRatios})--(\ref{Eq:SlipAngle}) can be used in order to model the actual vehicle slip motions.


\bibliographystyle{IEEEtran}
\bibliography{bib/bibliography}

\begin{thebibliography}{10}
\providecommand{\url}[1]{#1}
\csname url@rmstyle\endcsname
\providecommand{\newblock}{\relax}
\providecommand{\bibinfo}[2]{#2}
\providecommand\BIBentrySTDinterwordspacing{\spaceskip=0pt\relax}
\providecommand\BIBentryALTinterwordstretchfactor{4}
\providecommand\BIBentryALTinterwordspacing{\spaceskip=\fontdimen2\font plus
\BIBentryALTinterwordstretchfactor\fontdimen3\font minus
  \fontdimen4\font\relax}
\providecommand\BIBforeignlanguage[2]{{%
\expandafter\ifx\csname l@#1\endcsname\relax
\typeout{** WARNING: IEEEtran.bst: No hyphenation pattern has been}%
\typeout{** loaded for the language `#1'. Using the pattern for}%
\typeout{** the default language instead.}%
\else
\language=\csname l@#1\endcsname
\fi
#2}}

\bibitem{Martinez:2005}
J.~L. Mart\'{\i}nez, A.~Mandow, J.~Morales, S.~Pedraza, and
  A.~Garc\'{\i}a-Cerezo, ``Approximating kinematics for tracked mobile
  robots,'' \emph{Int. J. Rob. Res.}, vol.~24, no.~10, pp. 867--878, Oct. 2005.

\bibitem{Ishigami:2006}
G.~Ishigami, K.~Nagatani, and K.~Yoshida, ``Path following control with slip
  compensation on loose soil for exploration rover,'' in \emph{IROS}, 2006, pp.
  5552--5557.

\bibitem{Endo:2007}
D.~Endo, Y.~Okada, K.~Nagatani, and K.~Yoshida, ``Path following control for
  tracked vehicles based on slip-compensating odometry,'' in \emph{IROS}, 2007,
  pp. 2871--2876.

\bibitem{Moosavian:2008}
S.~A.~A. Moosavian and A.~Kalantari, ``Experimental slip estimation for exact
  kinematics modeling and control of a tracked mobile robot,'' in \emph{IROS},
  2008, pp. 95--100.

\bibitem{Dar:2010}
T.~M. Dar and R.~G. Longoria, ``Slip estimation for small-scale robotic tracked
  vehicles,'' in \emph{ACC}, 2010, pp. 6816--6821.

\bibitem{Williams:2008}
C.~Williams, S.~Klanke, V.~Sethu, and K.~M. Chai, ``Multi-task gaussian process
  learning of robot inverse dynamics,'' in \emph{Adv. Neural. Inf. Process
  Syst.}, 2009, pp. 265--272.

\bibitem{Deisenroth:2015}
M.~P. Deisenroth, D.~Fox, and C.~E. Rasmussen, ``Gaussian processes for
  data-efficient learning in robotics and control,'' \emph{IEEE Transactions on
  Pattern Analysis and Machine Intelligence}, vol.~37, no.~2, pp. 408--423,
  2015.

\bibitem{Ko:2007}
J.~Ko, D.~J. Klein, D.~Fox, and D.~Haehnel, ``Gaussian processes and
  reinforcement learning for identification and control of an autonomous
  blimp,'' in \emph{ICRA}, 2007, pp. 742--747.

\bibitem{Ostafew:2014}
C.~J. Ostafew, A.~P. Schoellig, and T.~D. Barfoot, ``Learning-based nonlinear
  model predictive control to improve vision-based mobile robot path-tracking
  in challenging outdoor environments,'' in \emph{ICRA}, 2014, pp. 4029--4036.

\bibitem{Ng:2000}
A.~Y. Ng and M.~Jordan, ``Pegasus: A policy search method for large mdps and
  pomdps,'' in \emph{Proceedings of the Sixteenth Conference on Uncertainty in
  Artificial Intelligence}, ser. UAI'00.\hskip 1em plus 0.5em minus 0.4em\relax
  San Francisco, CA, USA: Morgan Kaufmann Publishers Inc., 2000, pp. 406--415.

\bibitem{Siciliano:2010}
B.~Siciliano, L.~Sciavicco, L.~Villani, and G.~Oriolo, \emph{Robotics:
  modelling, planning and control}.\hskip 1em plus 0.5em minus 0.4em\relax
  Springer Science \& Business Media, 2010.

\bibitem{Oriolo:2002}
G.~Oriolo, A.~De~Luca, and M.~Vendittelli, ``Wmr control via dynamic feedback
  linearization: design, implementation, and experimental validation,''
  \emph{IEEE Transactions on control systems technology}, vol.~10, no.~6, pp.
  835--852, 2002.

\bibitem{Holoborodko:2008}
P.~Holoborodko, ``Smooth noise robust differentiators,''
  http://www.holoborodko.com/pavel/numerical-methods/numerical-derivative/smooth-low-noise-differentiators/,
  2008.

\bibitem{Isidori:1995}
A.~Isidori, \emph{Nonlinear Control Systems}, 3rd~ed.\hskip 1em plus 0.5em
  minus 0.4em\relax Springer-Verlag New York, Inc., 1995.

\bibitem{Rasmussen:2005}
C.~E. Rasmussen and C.~K.~I. Williams, \emph{Gaussian processes for machine
  Learning}.\hskip 1em plus 0.5em minus 0.4em\relax The MIT Press, 2005.

\bibitem{Le:1997}
A.~Tuan~Le, D.~C. Rye, and H.~F. Durrant-Whyte, ``Estimation of track-soil
  interactions for autonomous tracked vehicles,'' in \emph{ICRA}, vol.~2, 1997,
  pp. 1388--1393.

\end{thebibliography}

\begin{acronym}
\acro{CIGP}{Conditional Independent Gaussian Process}
\end{acronym}

\end{document}